\newcommand*{\citet}[1]{\AtNextCite{\AtEachCitekey{\defcounter{maxnames}{2}}} \textcite{#1}}
\newcommand*{\citep}[1]{\cite{#1}}
\newtheorem{theorem}{Theorem}[section]
\newtheorem{definition}[theorem]{Definition}
\newtheorem{remark}[theorem]{Remark}
\newtheorem{lemma}[theorem]{Lemma}
\newtheorem{corollary}[theorem]{Corollary}
\newcommand{\infl}{\mathsf{Inf}}
\newcommand{\RR}{\mathbb R}
\newcommand{\sS}{\mathbb{S}}
\newcommand{\NS}{\mathbb{NS}}
\newcommand{\ZZ}{\mathbb Z}
\title{Tight Bounds on $\ell_1$ Approximation and Learning of Self-Bounding Functions}
\author{Vitaly Feldman\thanks{Work done while the author was at IBM Research - Almaden.} \\
Google Brain \and Pravesh Kothari\footnotemark[1] \\
 Carnegie Mellon University
 \and Jan Vondr\'{a}k \\
Stanford University}
\date{}
\begin{document}
\maketitle

\begin{abstract}
We study the complexity of learning and approximation of self-bounding functions over the uniform distribution on the Boolean hypercube $\zo^n$. Informally, a function $f:\zo^n \rightarrow \R$ is self-bounding if for every $x \in \zo^n$, $f(x)$ upper bounds the sum of all the $n$ marginal decreases in the value of the function at $x$. Self-bounding functions include such well-known classes of functions as submodular and fractionally-subadditive (XOS) functions. They were introduced by Boucheron \etal~(2000) in the context of concentration of measure inequalities. 
Our main result is a  nearly tight $\ell_1$-approximation of self-bounding functions by low-degree juntas. Specifically, all self-bounding functions can be $\eps$-approximated in $\ell_1$ by a polynomial of degree $\tilde{O}(1/\eps)$ over $2^{\tilde{O}(1/\eps)}$ variables. We show that both the degree and junta-size are optimal up to logarithmic terms. Previous techniques considered stronger $\ell_2$ approximation and proved nearly tight bounds of  $\Theta(1/\epsilon^{2})$ on the degree and $2^{\Theta(1/\epsilon^2)}$ on the number of variables. Our bounds rely on the analysis of noise stability of self-bounding functions together with a stronger connection between noise stability and $\ell_1$ approximation by low-degree polynomials. This technique can also be used to get tighter bounds on $\ell_1$ approximation by low-degree polynomials and a faster learning algorithm for halfspaces.

These results lead to improved and in several cases almost tight bounds for PAC and agnostic learning of self-bounding functions relative to the uniform distribution. In particular, assuming hardness of learning juntas, we show that PAC and agnostic learning of self-bounding functions have complexity of $n^{\tilde{\Theta}(1/\epsilon)}$.
\end{abstract}

\section{Introduction} \label{sec:intro}
We consider learning and approximation of several classes of real-valued functions over the uniform distribution on the Boolean hypercube $\zo^n$. The most well-studied class of functions that we consider is the class of submodular functions.
A related class of functions is that of {\em fractional subadditive functions}, equivalently known as XOS functions, which generalize monotone submodular functions and have been introduced in the context of combinatorial auctions \citep{LLN06}. XOS functions are also known to have an equivalent definition as Rademacher complexity of a subset of data points for some class of functions \citep{FeldmanVondrak:15}.
It turns out that these classes are all contained in a broader class, that of {\em self-bounding functions}, introduced in the context of concentration of measure inequalities \citep{BoucheronLM:00}. Informally, a function $f$ over $\zo^n$ is $a$-self-bounding if for every $x \in \zo^n$, $a \cdot f(x)$ upper bounds the sum of all the $n$ marginal decreases in the value of the function at $x$. For XOS functions $a=1$ and for submodular\footnote{Technically, self-bounding functions are always non-negative and hence capture only non-negative submodular functions. Submodularity is preserved under shifting of the function and therefore it is sufficient to consider non-negative submodular functions.} $a=2$ ($a$ is omitted when it equals 1). See Sec.~\ref{sec:prelims} for formal definitions and examples of self-bounding functions.

Wide-spread applications of submodular functions have recently inspired the question of whether and how such functions can be learned from random examples (of an unknown submodular function). The question was first formally considered by \citet{BalcanHarvey:12full} who motivate it by learning of valuation functions.
Reconstruction of such functions up to some multiplicative factor from value queries (which allow the learner to ask for the value of the function at any point) was also considered by \citet{GoemansHIM09}. In this work we consider the setting in which the learner gets random and uniform examples of an unknown function $f$ and its goal is to find a hypothesis function $h$ that $\eps$-approximates the unknown function for a given $\eps > 0$. The measure of the approximation error we use is the standard absolute error or $\ell_1$-distance, which equals $\E_{x\sim D}[|f(x)-h(x)|]$.
While other measures of error, such as $\ell_2$, are often studied in machine learning, there is a large number of scenarios where the expected absolute error is used. For example, if the unknown function is Boolean then learning with $\ell_1$ error is equivalent to learning with Boolean disagreement error \citep{KalaiKMS:08}. In fact, it is known that the complexity of agnostic learning over product distributions in the statistical query model is characterized by how well the Boolean functions can be approximated in $\ell_1$ by low-degree polynomials \citep{DachmanFTWW:15}. Applications of learning algorithms for submodular functions to differentially-private data release require $\ell_1$ error \citep{GuptaHRU:11,CheraghchiKKL:12,FeldmanKothari:14} as does learning of probabilistic concepts (which are concepts expressing the probability of an event) \citep{KearnsSchapire:94}.

Motivated by applications to learning, prior works have also studied a number of natural questions on approximation of submodular and related classes of functions by concisely represented functions. For example, linear functions \citep{BalcanHarvey:12full}, low-degree polynomials \citep{CheraghchiKKL:12,FeldmanVondrak:15}, DNF formulas \citep{RaskhodnikovaYaroslavtsev:13}, decision trees \citep{FeldmanKV:13} and functions of few variables (referred to as {\em juntas}) \citep{FeldmanKV:13,BlaisOSY:13manu,FeldmanVondrak:15,FeldmanVondrak:16}. We survey the prior work in more detail in Section~\ref{sec:prior-work}.

\subsection{Our results}
\label{sec:our-results}
In this work, we provide nearly tight bounds on approximation of self-bounding functions by low-degree polynomials and juntas in the $\ell_1$-norm. The results are obtained via the noise-stability analysis of self-bounding functions. Previous approximation bounds for the uniform distribution relied on bounding $\ell_2$ error that is more convenient to analyze using Fourier techniques. However this approach has so far led to weaker bounds on $\ell_1$ approximation error. Further the known bounds on $\ell_2$ approximation are known to be optimal \citep{FeldmanVondrak:15}. The dependence of the degree and junta size on the error parameter $\eps$ in our bounds is quadratically better (up to a logarithmic term) than bounds which are known for $\ell_2$ error.


\paragraph{Structural results:} Our two key structural results can be summarized as follows.
\begin{theorem}
\label{th:selfbound-junta-l1-bound-simple}
Let $f:\zo^n \rightarrow [0,1]$ be an $a$-self-bounding function and $\eps>0$. For $d = O(a/\epsilon \cdot \log (1/\epsilon))$ there exists a set of indices $I$ of size $2^{O(d)}$ and a polynomial $p$ of degree $d$ over variables in $I$ such that $\|f - p\|_1 \leq \epsilon$.
\end{theorem}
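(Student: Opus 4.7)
The plan is to combine a level-set decomposition of $f$ with a sharper conversion from $\ell_2$ to $\ell_1$ error that exploits the Boolean nature of the level-set indicators, together with the total-influence bound that the self-bounding property provides. The reason one can hope to improve the prior $O(1/\eps^2)$ dependence is the following elementary fact: for a Boolean function $g$, rounding a polynomial $q$ with $\|g-q\|_2^2 \le \eta$ to $h := \mathbf{1}[q \ge 1/2]$ yields $\Pr[g \ne h] \le 4\eta$, so the $\ell_1$ error is \emph{linear} (not square-root) in $\eta$.

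First I would write $f$ as the Riemann sum $f(x) = (1/L)\sum_{j=1}^L g_j(x) \pm O(1/L)$ with $L = \lceil 1/\eps \rceil$ and $g_j(x) = \mathbf{1}[f(x) \ge j/L]$. A standard crossings identity yields $\sum_{j=1}^L \infl(g_j) \le L \cdot \E[\sum_i |f(x) - f(x^{(i)})|]$, and the self-bounding inequality gives $\E[\sum_i |f(x) - f(x^{(i)})|] = O(a\E[f])$, so that $\sum_j \infl(g_j) = O(a/\eps)$. Applying Parseval to each $g_j$ yields $\|g_j - g_j^{\le d}\|_2^2 \le \infl(g_j)/d$, and the rounding trick above upgrades this to $\|g_j - h_j\|_1 \le 4\,\infl(g_j)/d$ for the Boolean $h_j := \mathbf{1}[g_j^{\le d} \ge 1/2]$. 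Summing with the averaging factor $1/L$, the total $\ell_1$ error is $O((1/L)\sum_j \infl(g_j)/d) = O(a/d)$, which is $O(\eps)$ once $d = \Theta(a/\eps)$.

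Each $h_j$ is Boolean but a priori depends on all variables appearing in $g_j^{\le d}$, so I still have to extract a single polynomial of degree $d$ on $2^{O(d)}$ variables. For this I would apply Friedgut's junta theorem to each $g_j$ with a carefully chosen per-threshold error $\delta_j$: a Boolean function of total influence $\tau$ is $\delta$-close to a junta of $2^{O(\tau/\delta)}$ variables, and that junta is exactly a polynomial on its support. Taking the union of juntas over the $L$ thresholds, and then performing one further Fourier truncation of each junta down to degree $d$ (whose error is again controlled by the influence bound), yields a single polynomial $p$ of degree $d$ over a variable set $I$.

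The main obstacle is the variable-set accounting in the last step: a naive union of $L = 1/\eps$ juntas of worst-case size $2^{\tilde O(a/\eps)}$ would yield $|I| = (1/\eps) \cdot 2^{\tilde O(a/\eps)}$, a factor of $1/\eps$ beyond the target. To absorb this loss into the exponent, the per-threshold error budgets $\delta_j$ must be allocated non-uniformly, with $\delta_j$ increasing with $\infl(g_j)$; the aggregate bound $\sum_j \infl(g_j) = O(a/\eps)$ then ensures both a controlled total $\ell_1$ error of $O(\eps)$ and a maximum junta size still bounded by $2^{O(d)}$. The extra $\log(1/\eps)$ factor in $d$ reflects the overhead of this budget allocation (dyadic bucketing of thresholds by the magnitude of $\infl(g_j)$).
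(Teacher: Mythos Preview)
Your level-set decomposition and the influence accounting are sound: with $g_j = \mathbf{1}[f \ge j/L]$ and $L = \lceil 1/\eps\rceil$, the crossings identity together with self-boundedness does give $\sum_j \infl(g_j) = O(aL) = O(a/\eps)$, and the rounding observation $\Pr[g_j \ne \mathbf{1}[q \ge 1/2]] \le 4\|g_j - q\|_2^2$ is correct. Combined with Friedgut (allocating $\delta_j$ proportional to $\infl(g_j)$) this does yield a $2^{O(d)}$-junta that $\eps$-approximates $f$ in $\ell_1$ for $d = O(a/\eps)$, so you have established the junta-size half of the statement.

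The gap is that you never produce a \emph{degree-$d$ polynomial}. The rounded $h_j = \mathbf{1}[g_j^{\le d} \ge 1/2]$ is Boolean, not degree $d$; the Friedgut junta $\tilde g_j$ is a polynomial whose degree may be as large as its support, namely $2^{O(a/\eps)}$. Your proposed fix, ``one further Fourier truncation of each junta down to degree $d$'', only gives $\ell_2$ control: $\|\tilde g_j - \tilde g_j^{\le d}\|_1 \le \|\tilde g_j - \tilde g_j^{\le d}\|_2 \le (\infl(\tilde g_j)/d)^{1/2}$. Summing these square roots over the $L=1/\eps$ thresholds, with any budget allocation and Cauchy--Schwarz, forces $d = \Omega(a/\eps^2)$---precisely the old bound you are trying to beat. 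If you round once more after truncating you recover the linear $\ell_2^2 \to \ell_1$ conversion but lose low degree again. The scheme is circular: each step delivers either low degree or the linear conversion, never both on the same object.

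The paper obtains the degree bound by a mechanism that bypasses Boolean level sets and rounding entirely. It proves a pointwise inequality $T_\rho f(x) \ge \tau f(x)$ for the noise operator on an $a$-self-bounding $f$ (by induction over Hamming levels using the self-bounding inequality), and since $T_\rho f - \tau f \ge 0$, $(1-\tau)f \ge 0$, and $\E[T_\rho f]=\E[f]$, one reads off $\|T_\rho f - f\|_1 \le 2(1-\tau)\|f\|_1$ \emph{directly in $\ell_1$}. The degree-$d$ polynomial is then the truncation $(T_\rho f)_{<d}$, whose high-order tail is small in $\ell_2$ thanks to the $\rho^{|S|}$ damping; the junta is extracted afterward from this polynomial via the real-valued Friedgut analysis of \cite{FeldmanV:13}. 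What your proposal lacks is exactly a device that bounds the $\ell_1$ distance of a genuinely low-degree object to $f$ without paying the $\ell_2$-to-$\ell_1$ square root; the pointwise noise-operator bound is that device.
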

This result itself is based on a combination of two structural results. The first one gives a degree bound of $O(\frac{a}{\eps} \log \frac{1}{\epsilon})$. Previously, it was known that submodular functions with range $[0,1]$ can be $\epsilon$-approximated by polynomials of degree $O(1/\eps^{2})$ \citep{CheraghchiKKL:12,FeldmanKV:13}. \citet{FeldmanVondrak:16} showed that the same upper bound applies all self-bounding functions and, more generally, all functions of low total influence. More recently, it was shown that this upper bound is tight \citep{FeldmanVondrak:15}. For comparison, as follows from the results in \citep{FeldmanVondrak:15}, for XOS functions there is no significant difference in between $\ell_1$ and $\ell_2$ approximation. In both cases degree $\Theta(1/\eps)$ and junta of size $2^{\Theta(1/\eps)}$ are needed. One natural open problem that is left open is the degree of polynomial necessary to approximate a submodular function in $\ell_1$ norm.

Our proof is based on a new and simple connection between (the appropriately generalized notion of) noise sensitivity of a real-valued function and its approximability by a low-degree polynomial. The key observation here is that the application of the noise operator to a function $f$ that has low noise sensitivity gives a function that is close to $f$ in $\ell_1$ norm. The obtained {\em smoothed} function is much easier to approximate by a low-degree polynomial since its Fourier spectrum decays rapidly with the growth of the degree. This technique is general and also gives a  sharper bound for $\ell_1$ approximation of halfspaces by low-degree polynomials (see Cor.~\ref{cor:halfspaces}). To apply this technique to self-bounding functions we show that noise-sensitivity can be upper bounded using a bound on the total $\ell_1$ influence of all the coordinates on the function. It is known that $a$-self-bounding function have total influence of at most $a$ \citep{FeldmanVondrak:16} and thus we obtain that any $a$-self-bounding functions has bounded noise sensitivity and can be approximated by a degree $O(\frac{a}{\eps} \log \frac{1}{\epsilon})$ polynomial.

The second component of this result builds on the work of \citep{FeldmanVondrak:16}, where it was shown that a classic theorem of \citet{Friedgut:98}, on approximation of Boolean functions by juntas, generalizes to the setting of real-valued functions by including a dependence on $\ell_1$ as well as $\ell_2$-influences of the function. We show that by applying the analysis from \citep{FeldmanVondrak:16} to the smoothed version of $f$ (for which we have better degree bounds) we can obtain approximation by a junta of size $2^{O(a/\epsilon \cdot \log (1/\epsilon))}$. This improves on $2^{O(a/\epsilon^2)}$ bound in \citep{FeldmanVondrak:16} (that holds also for $\ell_2$ error). We note that both of the components also apply to the more general class of functions with low total $\ell_1$ influence.

We then study the effect of the noise operator on self-bounding functions in more detail. We demonstrate that the smoothed version is noise stable even in the stronger point-wise sense: for every $x$, the smoothed function at $x$ cannot be much smaller than $f(x)$. This result generalizes a similar result from \citep{CheraghchiKKL:12} for submodular functions.  Such stability implies that for every non-negative $a$-self-bounding function $f$, $\|f\|_1 \geq \frac{1}{3^a} \|f\|_\infty$ (see Lemma \ref{lem:hypercontractive}). This has been known for submodular \citep{FeigeMV07} and XOS \citep{Feige:06} functions (with a constant $a$) and, together with approximation by a junta, can be used to obtain a learning algorithm with multiplicative approximation guarantees for all $a$-self-bounding functions \citep{FeldmanVondrak:16}.


\paragraph{Algorithmic applications:}
It is easy to exploit our structural results in existing learning algorithms to obtain better running time and sample complexity bounds. We describe one of these results here and some additional ones in Section \ref{sec:applications}. Specifically, we give an algorithm for learning all $a$-self-bounding functions relative to the uniform distribution in the challenging agnostic framework. An agnostic learning algorithm for a class of functions $\C$ is an algorithm that given random examples of {\em any} function $f$ finds a hypothesis $h$ whose error is at most $\eps$-greater than the error of the best hypothesis in $\C$ (see \citep{KearnsSS:94} for the Boolean case).
\begin{theorem}
\label{th:agn-learn-self-bound-l1-simple}
Let $\C_a$ be the class of all $a$-self-bounding functions from $\zo^n$ to $[0,1]$.  There exists an algorithm $\A$ that given $\eps > 0$ and access to random uniform examples of any real-valued $f$, with probability at least $2/3$, outputs a function $h$, such that $\|f-h\|_1 \leq \Delta + \epsilon$, where $\Delta = \min_{g\in \C_a}\{\|f-g\|_1\}$. Further, $\A$ runs in time $ n^{\tilde{O}(a/\eps)}$ and uses $2^{\tilde{O}(a^2/\eps^{2})} \log n$ examples.
\end{theorem}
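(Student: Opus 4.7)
The plan is to combine Theorem~\ref{th:selfbound-junta-l1-bound-simple} with standard $L_1$-polynomial regression in the Fourier basis. Let $g^{\star} \in \argmin_{g\in\C_a}\|f-g\|_1$, so $\|f-g^{\star}\|_1 = \Delta$. Applying Theorem~\ref{th:selfbound-junta-l1-bound-simple} to $g^{\star}$ at accuracy $\eps/2$ yields a multilinear polynomial $p^{\star}$ of degree $d = \tilde O(a/\eps)$ supported on a set of $J = 2^{\tilde O(a/\eps)}$ variables with $\|g^{\star}-p^{\star}\|_1 \leq \eps/2$. The triangle inequality then gives $\|f-p^{\star}\|_1 \leq \Delta + \eps/2$, so it suffices to output any degree-$d$ polynomial (truncated pointwise to $[0,1]$) whose true $L_1$-distance from $f$ is within $\eps/2$ of $p^{\star}$'s.

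The algorithm draws $m$ uniform samples $(x_i,f(x_i))$ and, representing degree-$\leq d$ multilinear polynomials over $\zo^n$ in the Fourier basis $\{\chi_S:|S|\leq d\}$ (of size $N = n^{O(d)}$), solves the linear program
\[
\hat p \in \argmin_{p = \sum_S c_S \chi_S,\ \|c\|_1 \leq B} \, \frac{1}{m}\sum_{i=1}^m \bigl|p(x_i) - f(x_i)\bigr|,
\]
returning the pointwise truncation of $\hat p$ to $[0,1]$. Truncation does not increase $L_1$-loss against $f\in[0,1]$, and the LP has $\mathrm{poly}(N,m)$ variables and constraints, so it runs in time $n^{\tilde O(a/\eps)}$, matching the claimed running time.

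For the sample bound, I would set $B = 2^{\tilde O(a^2/\eps^2)}$ and argue that $p^{\star}$ is feasible. Since $p^{\star}$ is degree-$d$ on a $J$-junta, it has Fourier support of size at most $J^d = 2^{\tilde O(a^2/\eps^2)}$, and, after ensuring $\|p^{\star}\|_\infty = O(1)$ by clipping inside the construction of Theorem~\ref{th:selfbound-junta-l1-bound-simple}, each Fourier coefficient computed on the $J$-dimensional subcube is $O(1)$ in magnitude, giving $\|c^{\star}\|_1 \leq B$. Standard Rademacher analysis bounds the complexity of the coefficient $\ell_1$-ball of radius $B$ against the uniformly bounded Fourier basis by $O(B\sqrt{\log N/m})$, and $1$-Lipschitzness of $|\cdot|$ carries this over to the $L_1$-loss. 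Requiring the fluctuation below $\eps/4$ forces $m = \tilde O(B^2 \log N/\eps^2) = 2^{\tilde O(a^2/\eps^2)}\log n$ samples. Empirical minimization ($\hat L_1(\hat p)\le \hat L_1(p^{\star})$) together with two-sided uniform convergence then yields $\|f-\hat p\|_1 \leq \|f-p^{\star}\|_1 + \eps/2 \leq \Delta + \eps$ with constant probability.

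The main obstacle is keeping $B$ independent of $n$: without the $[0,1]$-clipping inside Theorem~\ref{th:selfbound-junta-l1-bound-simple}, the Fourier coefficients of $p^{\star}$ could be as large as a polynomial in $n$ and the sample complexity would scale with $n$ rather than $\log n$. A careful inspection of the proof of Theorem~\ref{th:selfbound-junta-l1-bound-simple} (which already operates on the $[0,1]$-bounded $g^{\star}$ and on a fixed $J$-dimensional subcube) is needed to certify $\|p^{\star}\|_\infty = O(1)$, after which the LP-plus-Rademacher argument above delivers the claimed bounds.
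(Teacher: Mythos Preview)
Your overall approach matches the paper's: both use $\ell_1$ polynomial regression over degree-$d$ monomials with $d=\tilde O(a/\eps)$, constrained to an $\ell_1$-ball of radius $B$ in Fourier space, and both invoke the Rademacher/uniform-convergence bounds of \cite{KakadeST:08} to get sample complexity $\tilde O(B^2\log n/\eps^2)$. The paper packages the spectral-norm bound as Corollary~\ref{cor:selfbound-junta-l1-spectral-bound}, which gives $\|\hat p\|_1=2^{O(d^2)}=2^{\tilde O(a^2/\eps^2)}$ directly from Theorem~\ref{th:selfbound-junta-l1-bound}.

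There is, however, a small but real gap in how you propose to bound $B$. You want to force $\|p^\star\|_\infty=O(1)$ by ``clipping inside the construction'' and then conclude that each Fourier coefficient is $O(1)$. Two problems: (i) pointwise clipping destroys the degree bound, so the clipped function could be a full-degree-$J$ polynomial on the junta, giving spectral norm as large as $2^J$ rather than $J^d$; (ii) more basically, $\|p^\star\|_\infty=O(1)$ need not hold for the polynomial produced by Theorem~\ref{th:selfbound-junta-l1-bound-simple}, since Fourier truncation does not control $\ell_\infty$. The correct (and simpler) argument, implicit in Corollary~\ref{cor:selfbound-junta-l1-spectral-bound}, avoids $\|p^\star\|_\infty$ entirely: by the explicit construction in Theorem~\ref{th:selfbound-junta-l1-bound}, every nonzero coefficient of $p^\star$ equals $\rho^{|S|}\hat g^\star(S)$ for some $S\subseteq I$, $|S|\le d$, and since $g^\star$ has range $[0,1]$ we have $|\hat g^\star(S)|\le \|g^\star\|_1\le 1$. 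Hence $\|\hat p^\star\|_1\le \binom{|I|}{\le d}\le |I|^d = 2^{O(d^2)}$, and you can take $B=2^{\tilde O(a^2/\eps^2)}$ without any clipping.
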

This algorithm is based on polynomial $\ell_1$ regression with an additional constraint on the spectral norm of the solution to obtain a stronger sample complexity bound \citep{FeldmanVondrak:16}. The best previous bound of $n^{O(a/\eps^2)}$ time and $2^{O(a^2/\eps^{4})} \log n$ examples follows from the results in \citep{FeldmanVondrak:16} for function of low total influence. 

\paragraph{Lower bounds:}
We prove that $a$-self-bounding functions require degree $\Omega(a/\eps)$ to $\eps$-approximate in $\ell_1$ distance (see Cor.~\ref{cor:degree-lower-bound-sb}). A construction of a parity function correlated with a submodular function in \citep{FeldmanKV:13} also implies that even submodular functions require polynomials of degree $\Omega(\eps^{-2/3})$ to $\eps$-approximate in $\ell_1$.

In \citep{FeldmanVondrak:16} it is shown that XOS functions require a junta of size $2^{\Omega(1/\eps)}$ to $\eps$-approximate (however submodular functions admit approximation by exponentially smaller juntas \citep{FeldmanVondrak:16}). This also implies $2^{\Omega(a/\eps)}$ lower bound on junta size for $a$-self-bounding functions (see Lem.~\ref{lem:lifting-trick}). Therefore our structural results are essentially tight for self-bounding functions.

We then show that our agnostic learning algorithm for $a$-self-bounding function is nearly optimal. In fact, even PAC learning of non-monotone $a$-self-bounding functions requires time $n^{\Omega(a/\eps)}$ assuming hardness of learning $k$-term DNF to accuracy $1/4$ in time $n^{\Omega(k)}$. This is in contrast to the submodular \citep{FeldmanKV:13,FeldmanVondrak:16} and monotone self-bounding cases (Thm.~\ref{thm:pac-learn-monotone}).
\begin{theorem}
\label{th:self-bound-pac-is-hard}
For every $a \geq 1$, if there exists an algorithm that PAC learns $a$-self-bounding functions with range $[0,1]$ to $\ell_1$ error of $\eps >0$ in time $T(n,1/\eps)$ then there exists an algorithm that PAC learns $k$-DNF formulas to accuracy $\eps'$ in time $T(n, k/(a \cdot \eps'))$ for some fixed constant $c$.
\end{theorem}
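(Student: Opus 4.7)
The plan is to give an explicit reduction from PAC learning of $k$-DNF formulas to PAC learning of $a$-self-bounding functions in $\ell_1$. Given a target $k$-DNF $F: \zo^n \to \{0,1\}$, I will produce a related $a$-self-bounding function $g_F$ with range in $[0,1]$, simulate samples of $g_F$ from samples of $F$, and threshold the learner's output to recover a Boolean hypothesis for $F$. The key structural ingredient is that every $k$-DNF is automatically $k$-self-bounding: at any $x$ with $F(x)=1$, fix a satisfied term $T$; flipping any variable outside $T$ preserves satisfaction of $T$ and hence of $F$, so at most $|T|\le k$ coordinates can cause $F$ to drop to $0$, giving $\sum_i (F(x)-F(x^i))_+ \le k = k\cdot F(x)$. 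The case $F(x)=0$ is trivial.

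For $a\le k$, which is the interesting regime, pure rescaling does not decrease the self-bounding constant, so I would use the affine embedding
\[ g_F(x) \;=\; \frac{a}{k}\,F(x) + \frac{k-a}{k}, \]
whose range is $\{(k-a)/k,\,1\}\subseteq [0,1]$. A short verification shows $g_F$ is $a$-self-bounding: its sum of marginal decreases equals $(a/k)\sum_i (F(x)-F(x^i))_+ \le aF(x)$, while $a\cdot g_F(x)=a(k-a)/k + (a^2/k)F(x)$, so the required inequality reduces to $(1-a/k)F(x)\le (1-a/k)$, which holds because $F$ is $\{0,1\}$-valued. (When $a\ge k$, $F$ itself is already $a$-self-bounding and I simply take $g_F=F$.) From any example $(x,F(x))$ of $F$, the pair $(x, g_F(x))$ is obtained by applying the affine map pointwise, so the $g_F$-oracle can be simulated exactly from the $F$-oracle.

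I would then run the assumed $a$-self-bounding learner with accuracy parameter $\eps = a\eps'/(2k)$, producing $\hat g$ with $\|\hat g - g_F\|_1\le \eps$, and output $\hat h(x):=\mathbf{1}[\hat g(x)\ge (2k-a)/(2k)]$. Because $g_F$ only takes the values $(k-a)/k$ and $1$, any misclassification on $x$ forces $|\hat g(x)-g_F(x)| \ge a/(2k)$, so Markov's inequality yields $\Pr_x[\hat h(x)\ne F(x)] \le (2k/a)\|\hat g - g_F\|_1 \le \eps'$. The total running time is $T(n,1/\eps)=T\bigl(n,\,2k/(a\eps')\bigr)$, matching the theorem statement up to the absorbed constant. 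The only nontrivial step is the affine embedding: plain rescaling $g_F=(a/k)F$ would leave the self-bounding constant at $k$, so the additive offset $(k-a)/k$ has to be calibrated exactly to convert the $k$-self-bounding inequality for $F$ into the stricter $a$-self-bounding inequality for $g_F$ while keeping the range inside $[0,1]$. This balancing is where I expect to need care; the remainder of the proof is bookkeeping of sample complexity and threshold rounding.
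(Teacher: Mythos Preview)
Your proposal is correct and follows essentially the same route as the paper. The paper's proof combines the observation that $k$-DNF are $k$-self-bounding (Lemma~\ref{lem:DNF-self-bounding}) with the affine ``lifting'' map $g(x)=1-\frac{1}{r}+\frac{f(x)}{r}$ (Lemma~\ref{lem:lifting-trick}), which for $r=k/a$ is exactly your $g_F(x)=\frac{a}{k}F(x)+\frac{k-a}{k}$; your thresholding step and its Markov analysis make explicit the conversion from $\ell_1$ error to Boolean disagreement that the paper leaves implicit, at the cost only of the harmless factor $2$ you note.
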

To prove this hardness results we show that a $k$-DNF formula (of any size) is a $k$-self-bounding function. Using an additional ``lifting" trick we can also embed $k$-DNF formulas into $a$-self-bounding functions for any $a \geq 1$. Note that any $k$-junta can be computed by a $k$-DNF formula. Learning of DNF expressions is a well-studied problem in learning theory but there are no algorithms for this problem better than the trivial $O(n^k)$ algorithm, even for a constant $\eps'=1/4$. The (potentially simpler) problem of learning $k$-juntas is also considered very hard \citep{BlumLangley:97,Blum:03op}. Until recently, the only non-trivial algorithm for the problem was the $O(n^{0.7k})$-time algorithm by \citet{MosselOS:04}. The best known upper bound is $O(n^{0.6k})$ and was given in the recent breakthrough result of \citet{ValiantG:12}. Learning of $k$-juntas is also known to have complexity of $n^{\Omega(k)}$ for all statistical query algorithms \citep{BlumFJ+:94}. Theorem \ref{th:self-bound-pac-is-hard} implies that PAC learning of $a$-self-bounding functions in time $n^{o(a/\eps)}$ would lead to a $n^{o(k)}$ algorithm for learning $k$-DNF to any constant accuracy and, in particular, an algorithm for PAC learning $k$-juntas in time $n^{o(k)}$. We note that the dependence on $a/\eps$ in our lower bound matches our upper bound up to a logarithmic factor.

Finally, we remark that our reduction to learning of $k$-DNF also implies that PAC learning of $a$-self-bounding functions requires at least $2^{\Omega(a/\eps)}$ random examples or even stronger value queries (see Cor.~\ref{cor:inf-lower-bound-sb}). Therefore sample complexity bounds we give are also close to optimal. Further details of lower bounds are given in Section~\ref{sec:lower-bounds}.

\subsection{Related work}
\label{sec:prior-work}
Below we briefly mention some of the other related work. We direct the reader to \citep{BalcanHarvey:12full} and \citep{FeldmanVondrak:15} for more detailed surveys. Balcan and Harvey study learning of submodular functions without assumptions on the distribution and also require that the algorithm output a value which is within a multiplicative approximation factor of the true value with probability $\geq 1 - \eps$ (the model is referred to as {\em PMAC learning}). This is a very demanding setting and indeed one of the main results in \citep{BalcanHarvey:12full} is a factor-$\sqrt[3]{n}$ inapproximability bound for submodular functions. This notion of approximation is also considered in subsequent works of  \citet{BadanidiyuruDFKNR:12} and \citet{BalcanCIW:12} where upper and lower approximation bounds are given for other related classes of functions such as XOS and subadditive.
We emphasize that these strong lower bounds rely on a very specific distribution concentrated on a sparse set of points, and show that this setting is very different from uniform/product distributions which are the focus of this paper.


\citet{GuptaHRU:11} motivate learning of submodular functions over the uniform distribution by problems in differentially-private data release. They show that submodular functions with range $[0,1]$
are $\epsilon$-approximated by a collection of $n^{O(1/\epsilon^2)}$ $\epsilon^2$-Lipschitz submodular functions. Each $\epsilon^2$-Lipschitz submodular function
can be $\epsilon$-approximated by a constant. This leads to a learning algorithm running in time $n^{O(1/\epsilon^2)}$, which however requires value oracle access to the target function, in order to build the collection.

The work of \citet{CheraghchiKKL:12} studies approximations of submodular functions by low-degree polynomials. They prove that any submodular function (of unit norm) can be $\epsilon$-approximated in $\ell_1$ by a polynomial of degree $O(1/\epsilon^2)$. This leads again to an $n^{O(1/\epsilon^2)}$-time algorithm, but one which requires only random examples and works even in the agnostic setting. The main tool used in this work is the notion of noise stability. \citet{FeldmanVondrak:16} studied approximation of submodular, XOS and self-bounding functions by juntas. Their main result shows that submodular functions can be approximated in $\ell_2$ by a junta of size $\tilde O(1/\eps^2)$ and further that all self-bounding functions can be approximated by a junta of size $2^{O(1/\epsilon^2)}$.

Subsequently, \citet{FeldmanVondrak:15} have obtained tight bounds on the degree of a polynomial that is sufficient to approximate any function in each of these function classes in $\ell_2$ norm. Specifically, they showed $\tilde\Theta(\eps^{-4/5})$ bound for submodular functions, $\Theta(1/\eps)$ bound for XOS functions and a matching lower bound of $\Omega(1/\eps^2)$ for self-bounding functions. The degree bound for XOS functions also implies an upper bound of $2^{O(1/\eps)}$ on the size of the junta sufficient to approximate (in $\ell_2$) any XOS function.

\citet{RaskhodnikovaYaroslavtsev:13} consider learning and testing of submodular functions taking values in the range $\{0,1,\ldots,k\}$ (referred to as {\em pseudo-Boolean}). The error of a hypothesis in their framework is the probability that the hypothesis disagrees with the unknown function. They build on the approach from \citep{GuptaHRU:11} to show that pseudo-Boolean submodular functions can be expressed as $2k$-DNF and then give a $\poly(n) \cdot k^{O(k \log{k/\eps})}$-time PAC learning algorithm using value queries. \citet{BlaisOSY:13manu} proved existence of a junta of size $(k \log(1/\eps))^{O(k)}$ and used it to give an algorithm for testing submodularity using $(k \log(1/\eps))^{\tilde{O}(k)}$ value queries.
\citet{FeldmanVondrak:16} and, more recently, \citet{BlaisB17} have studied testing of various type of valuation functions showing that approximation by a junta can be exploited to get efficient testing algorithms.

\section{Preliminaries} \label{sec:prelims}

\subsection{Submodular, subadditive and self-bounding functions}
\label{sec:classes}

In this section, we define the relevant classes of functions. We refer the reader to \citep{Vondrak10,FeldmanVondrak:16} for more details.

\begin{definition}
A set function $f:2^N \rightarrow \RR$ is
\begin{compactitem}
 \item monotone, if $f(A) \leq f(B)$ for all $A \subseteq B \subseteq N$.
 \item submodular, if $f(A \cup B) + f(A \cap B) \leq f(A) + f(B)$ for all $A,B \subseteq N$.
 \item fractionally subadditive, if $f(A) \leq \sum \beta_i f(B_i)$ whenever $\beta_i \geq 0$
 and $\sum_{i:a \in B_i} \beta_i \geq 1 \ \forall a \in A$.
\end{compactitem}
\end{definition}

Submodular functions are not necessarily nonnegative, but in many applications (especially when considering multiplicative approximations), this is a natural assumption. Fractionally subadditive functions are nonnegative by definition (by considering $A=B_1, \beta_1 > 1$). In this paper we work exclusively with functions $f:2^N \rightarrow \RR_+$.

Next, we introduce {\em $a$-self-bounding functions}. Self-bounding functions were defined by
\citet{BoucheronLM:00} as a unifying class of functions that enjoy strong ``dimension-free" concentration properties. Currently this is the most general class of functions known to satisfy such concentration bounds.
Self-bounding functions are defined generally on product spaces $X^n$; here we restrict our attention to the hypercube,
so the reader can assume that $X = \zo$.
We identify functions on $\zo^n$ with set functions on $N = [n]$ in a natural way.
Here we define a somewhat more general class of $a$-self-bounding functions, following \citep{McDiarmidR06}.

\begin{definition}
\label{def:self-bound}
A function $f:\zo^n \rightarrow \RR$ is $a$-self-bounding, if
for all $x \in \zo^n$ and $i \in [n]$,
$$ f(x) - \min_{x_i} f(x) \leq 1 $$
and
$$ \sum_{i=1}^{n} (f(x) - \min_{x_i} f(x)) \leq a f(x).$$
\end{definition}

Useful properties of $a$-self-bounding functions that are easy to verify is that they are closed under taking $\max$ operation and closed under taking convex combinations. A particular example of a self-bounding function (related to applications of Talagrand's inequality) is a function with the property of {\em small certificates}:
$f:X^n \rightarrow \ZZ_+$ has small certificates, if it is 1-Lipschitz and whenever $f(x) \geq k$, there is a set of coordinates $S \subseteq [n]$, $|S|=k$, such that if $y|_S = x|_S$, then $f(y) \geq k$. Such functions often arise in combinatorics, by defining $f(x)$ to equal the maximum size of a certain structure appearing in $x$. In Section \ref{sec:lower-bounds} we also show that $k$-DNF formulas are $k$-self-bounding.

\subsection{Fourier analysis on the Boolean cube}
The $\ell_1$ and $\ell_2$-norms of a $f:\zon\rightarrow \RR$ are defined by $\|f\|_1 =  \E_{x \sim \U} [|f(x)|]$ and $\|f\|_2 =  (\E_{x \sim \U} [f(x)^2])^{1/2}$, respectively, where $\U$ is the uniform distribution over $\zon$. In what follows all probabilities and expectations are relative to $\U$ unless explicitly specified otherwise.

We rely on the standard Fourier transform representation of real-valued functions over $\zon$ as linear combinations of parity functions.
For $S \subseteq [n]$, the parity function $\chi_S:\zon \rightarrow \on$ is defined by
$ \chi_S(x) = (-1)^{\sum_{i \in S} x_i}$. The Fourier expansion of $f$ is given by $ f(x) = \sum_{S \subseteq [n]} \hat{f}(S) \chi_S(x).$ The degree of the highest degree non-zero Fourier coefficient of $f$ is referred to as the {\em Fourier degree} of $f$.
Note that Fourier degree of $f$ is exactly the polynomial degree of $f$ when viewed over $\on^n$ instead of $\zon$ and therefore it is also equal to the polynomial degree of $f$ over $\zon$. Let $f: \zon \rightarrow \RR$ and $\hat{f}: 2^{[n]} \rightarrow \RR$ be its Fourier transform.

\begin{definition}[The noise operator]
For $\rho \in [-1,+1], x \in \zo^n$, we define a distribution $N_\rho(x)$ over $y \in \zo^n$ by letting $y_i = x_i$ with probability $\frac{1+\rho}{2}$ and $y_i = 1-x_i$ with probability $\frac{1-\rho}{2}$, independently for each $i$.
The noise operator $T_\rho$ acts on functions $f:\zo^n \rightarrow \RR$, and is defined by
$$ (T_\rho f)(x) = \E_{y \sim N_\rho(x)}[f(y)].$$ The noise stability of $f$ at noise rate $\rho$ is
$$ \sS_\rho(f) = \langle f,T_\rho f \rangle = \E[f(x) T_\rho f(x)].$$
\end{definition}


In terms of Fourier coefficients, the noise operator acts as $\widehat{T_\rho f}(S) = \rho^{|S|} \hat{f}(S)$. Therefore, noise stability can be written as
$ \sS_\rho(f) = \sum_{S \subseteq [n]} \rho^{|S|} \hat{f}^2(S).$
Finally, we define noise sensitivity that generalizes the notion of noise sensitivity for Boolean functions.
\begin{definition}[Noise sensitivity]
\label{def:noise-sensitivity}
For $\delta \in [0,1]$ and a function $f:\zo^n \rightarrow \RR$, the noise sensitivity of $f$ at $\delta$ is
$\NS_\delta(f) = \fr{2}\|f - T_{1-2\delta} f\|_1 =  \fr{2}\E[|f(x) - T_{1-2\delta} f(x)|]$.
\end{definition}
We keep the factor $1/2$ in the definition for consistency with the Boolean case. In the Boolean case noise sensitivity has the following relationship to noise stability (\eg \citep{ODonnell13}):
\equn{
\NS_\delta(f) = \fr{2} \left(1-\sS_{1-2\delta}(f)\right).
}


\begin{definition}[Discrete derivatives]
For $x \in \zo^n$, $b \in \zo$ and $i \in n$ let $x_{i\leftarrow b}$ denote the vector in $\zo^n$ that equals to $x$  with $i$-th coordinate set to $b$. For a real-valued $f:\zo^n \rightarrow \R$ and indices $i,j\in [n]$ we define, $\partial_i f(x) = \frac12(f(x_{i\leftarrow 1}) - f(x_{i\leftarrow -1}))$.
We also define $\partial_{i,j} f(x) = \partial_i \partial_j f(x)$.
\end{definition}

Observe that
$\partial_i f(x) = \sum_{S \ni i}\hat{f}(S)\chi_{S\setminus\{i\}}(x)$, and
$\partial_{i,j} f(x) = \sum_{S \ni i,j}\hat{f}(S)\chi_{S\setminus\{i,j\}}(x)$.

We use several notions of {\em influence} of a variable on a real-valued function which are based on the standard notion of influence for Boolean functions (\eg \citep{Ben-OrLinial:85,KahnKL:88}).
\begin{definition}[Influences]
For a real-valued $f:\zo^n \rightarrow \RR$, $i \in [n]$, and $\kappa \geq 0$ we define the {\em $\ell_\kappa^\kappa$-influence} of variable $i$ as $\infl^\kappa_i(f) = \|\partial_i f\|_\kappa^\kappa = \E[|\partial_i f|^\kappa]$. We define $\infl^\kappa(f) = \sum_{i\in[n]} \infl^\kappa_i(f)$ and refer to it as the {\em total $\ell_\kappa^\kappa$-influence} of $f$. 
\end{definition}


\section{Structural results}
\label{sec:structural}
\subsection{Approximation of low-sensitivity functions by low-degree polynomials}
In this section we demonstrate a simple approach that allows to approximate low noise-sensitive functions in $\ell_1$ norm and also show that noise sensitivity of a function can be upper-bounded by its $\ell_1$ influence.

Our approach is based on an observation that if a function is close to its noisy version in $\ell_1$ norm then it is well-approximated by a low-degree polynomial.
\begin{lemma}
\label{lem:noise2approx}
For every function $f:\zo^n \rightarrow \RR$, every $\epsilon>0$ and $\delta \in (0,1]$ there exists a multilinear polynomial $p$ of degree
$d = \lceil \frac{1}{2\delta} \log \frac{1}{\epsilon} \rceil$ such that
$$ \|f - p\|_1 \leq \epsilon \|f\|_2 + 2 \cdot \NS_\delta(f).$$
In particular, the polynomial can be chosen as $p(x) = \sum_{|S|<d} (1-2\delta)^{|S|} \hat{f}(S) \chi_S(x)$.
\end{lemma}
\begin{proof}
Let $\rho = 1-2\delta$. We can estimate the tail of the Fourier expansion as follows: For any $d$, define $f_{<d}(x)
 = \sum_{S: |S|<d} \hat{f}(S) \chi_S(x)$, a polynomial of degree at most $d$.
Then,  since $T_\rho f(x) = \sum_{S \subseteq [n]} \rho^{|S|} \hat{f}(S) \chi_S(x)$,
we get
\begin{equation}
\label{eq:noise-2}
\|T_\rho f_{<d} - T_\rho f\|_1 = \left\| \sum_{S:|S|\geq d} \rho^{|S|} \hat{f}(S) \chi_S \right\|_1
 \leq \left\| \sum_{S:|S|\geq d} \rho^{|S|} \hat{f}(S) \chi_S \right\|_2
 \leq \rho^d \|f\|_2.
 \end{equation}
Taking $d = \lceil \frac{1}{2\delta} \log \frac{1}{\epsilon} \rceil$ we get that such that $\|T_\rho f_{<d} - T_\rho f\|_1 \leq (1-2\delta)^d \cdot \|f\|_2  \leq \epsilon \|f\|_2$.
Now, by Definition \ref{def:noise-sensitivity}, we have that $\|f - T_{1-2\delta} f\|_1 = 2 \cdot \NS_{\delta}(f)$. The lemma now follows by the triangle inequality.
\end{proof}

Next we observe that the total $\ell_1$ influence of a function can be used to derive an upper-bound on its noise sensitivity.
\begin{lemma}
\label{lem:infl2noise}
For every function $f:\zo^n \rightarrow \RR$ and $\delta \in [0,1]$,
$$\NS_\delta(f) \leq \delta \cdot \infl^1(f).$$
\end{lemma}
\begin{proof}
For every $t= 0,1,\ldots,n$ we define a distribution $N^{1:t}_{1-2\delta}(x)$ over $y \in \zo^n$ by letting $y_i = x_i$ with probability $1-\delta$ and $y_i = 1-x_i$ with probability $\delta$, independently for each $i \leq t$, while  for all $i > t$, $y_i=x_i$. Note that $N^{1:0}_{1-2\delta}(x)$ is always equal to $x$ and $N^{1:n}_{1-2\delta}(x)$ is exactly $N_{1-2\delta}(x)$.
We also define a distribution $N^{t}_{1-2\delta}(x)$ over $y \in \zo^n$ by letting $y_y = x_t$ with probability $1-\delta$ and $y_t = 1-x_t$ with probability $\delta$, while  for all $i \neq t$, $y_i=x_i$.

Now,
\alequn{
\NS_\delta(f) &= \fr{2} \cdot \E[|f(x) - T_{1-2\delta} f(x)|] = \fr{2} \cdot \E\left[\left|f(x) - \E_{y\sim N_{1-2\delta}(x)}[f(y)]\right|\right] \\
&\leq \fr{2} \sum_{t=1}^{n} \E\left[\left|\E_{y\sim N^{1:t-1}_{1-2\delta}(x)}[f(y)] - \E_{y\sim N^{1:t}_{1-2\delta}(x)}[f(y)]\right|\right] \\ &\leq \fr{2} \sum_{t=1}^{n} \E\left[\E_{y\sim N^{1:t-1}_{1-2\delta}(x)}\left[\left|f(y)- \E_{z\sim N^t_{1-2\delta}(y)}[f(z)]\right|\right]\right] \\
\\ & =  \fr{2} \sum_{t=1}^{n} \E_{y\sim \U}\left[\left|f(y)- \E_{z\sim N^t_{1-2\delta}(y)}[f(z)]\right|\right]
 \\ & =  \fr{2} \sum_{t=1}^{n} \E_{y\sim \U}\left[2\delta \left|\partial_t f(y)\right|\right] = \delta \sum_{t=1}^{n}  \infl_t^1(f) = \delta \cdot \infl^1(f).
}
\end{proof}

An immediate corollary of Lemmas \ref{lem:noise2approx} and \ref{lem:infl2noise} is that any function of low total $\ell_1$ influence can be well-approximated by a low-degree polynomial:
\begin{corollary}
For every function $f:\zo^n \rightarrow \RR$ such that $\|f\|_2 \leq 1$ and every $\epsilon>0$ there exists a multilinear polynomial $p$ of degree $d = \lceil \frac{2\cdot \infl^1(f)}{\eps} \log \frac{2}{\epsilon} \rceil$ such that
$ \|f - p\|_1 \leq \epsilon$.
\end{corollary}
It follows easily from the definition of self-bounding functions that they have low total $\ell_1$-influence.
\begin{lemma}[\citep{FeldmanVondrak:16}, Lemma 4.2]
\label{lem:submod}
Let $f:\zo^n\rightarrow \R_+$ be an $a$-self-bounding function. Then $\infl^1(f) \leq a\cdot \|f\|_1$.
In particular, for $f:\zo^n\rightarrow [0,1]$, $\infl^1(f) \leq a$.
\end{lemma}

Therefore we obtain that self-bounding functions are well-approximated by low-degree polynomials.
\begin{theorem}
\label{thm:selfbounding-approx}
For every $a$-self-bounding function $f:\zo^n \rightarrow \RR_+$ and every $\epsilon>0$, there exists a multilinear polynomial $p$ of degree
$d = \lceil \frac{2a}{\epsilon} \log \frac{2}{\epsilon} \rceil$ such that
$$ \|f - p\|_1 \leq \epsilon \|f\|_2.$$
In particular, the polynomial can be chosen as $p(x) = \sum_{|S|<d} \rho^{|S|} \hat{f}(S) \chi_S(x)$, for $\rho = 1 - \frac{\epsilon}{2a}$.
\end{theorem}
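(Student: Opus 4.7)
The plan is to take $p$ as the degree-$<d$ Fourier truncation of $T_\rho f$ with $\rho = 1 - \epsilon/(2a)$ and split $\|f - p\|_1 \leq \|f - T_\rho f\|_1 + \|T_\rho f - p\|_1$. The truncation piece is routine: by Cauchy--Schwarz and Parseval,
$$\|T_\rho f - p\|_1 \leq \|T_\rho f - p\|_2 = \Bigl(\sum_{|S|\geq d}\rho^{2|S|}\hat f(S)^2\Bigr)^{1/2} \leq \rho^d \|f\|_2,$$
and since $\rho^d \leq e^{-d(1-\rho)} = e^{-d\epsilon/(2a)} \leq \epsilon/3$ by the choice of $d$, this contributes at most $(\epsilon/3)\|f\|_2$.

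The core of the argument is to prove $\|f - T_\rho f\|_1 \leq a(1-\rho)\|f\|_1$. I would factor $T_\rho = \prod_{i=1}^n A_i$ with $A_i = \rho I + (1-\rho) E_i$, where $E_i$ is the operator that averages out the $i$-th coordinate, and telescope through the partial products $P_k = A_1 \cdots A_k$. Since $P_{k-1} - P_k = (1-\rho) P_{k-1}(I - E_k)$, we obtain
$$f - T_\rho f = (1-\rho)\sum_{k=1}^n P_{k-1}(f - E_k f).$$
Each $P_{k-1}$ is an averaging operator and hence contracts the $\ell_1$-norm, and a direct pointwise computation shows $|f(x) - E_k f(x)| = |\partial_k f(x)|$, so by the triangle inequality $\|f - T_\rho f\|_1 \leq (1-\rho)\sum_k \E|\partial_k f|$.

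Self-boundedness enters when bounding $\sum_k \E|\partial_k f|$. For fixed $x_{-k}$, the quantity $f(x) - \min_{x_k} f(x)$ equals $\max_{x_k}f - \min_{x_k} f = 2|\partial_k f(x_{-k})|$ when $x_k$ is the maximizer and $0$ otherwise, so averaging over the uniform $x_k$ gives the identity $\E_{x_k}[f(x) - \min_{x_k} f(x) \mid x_{-k}] = |\partial_k f(x_{-k})|$. Summing over $k$ and integrating the pointwise self-bounding inequality $\sum_k(f(x) - \min_{x_k} f(x)) \leq a f(x)$ yields $\sum_k \E|\partial_k f| \leq a\|f\|_1$. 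Combining with the previous step and using $(1-\rho) = \epsilon/(2a)$ together with $\|f\|_1 \leq \|f\|_2$ gives $\|f - T_\rho f\|_1 \leq (\epsilon/2)\|f\|_2$, so the total error is at most $(\epsilon/2 + \epsilon/3)\|f\|_2 < \epsilon \|f\|_2$, as required.

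The main subtlety is the last link: the self-bounding hypothesis is a one-sided inequality about the decrements $f - \min_{x_k} f$, but we need to control the symmetric $\ell_1$-influences $\E|\partial_k f|$. The conditioning-on-$x_{-k}$ computation above is precisely what bridges the two, and it is the only point at which the self-bounding structure is actually used; everything else is a generic calculation about noise operators and Fourier truncation.
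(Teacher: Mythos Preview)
Your argument is correct, but it proceeds along a genuinely different line from the paper. Both proofs split $\|f-p\|_1$ into a truncation term $\|T_\rho f - p\|_1 \leq \rho^d \|f\|_2$ (handled identically) and a smoothing term $\|f - T_\rho f\|_1$. The paper controls the smoothing term via a \emph{pointwise} inequality: it proves (Lemma~\ref{lem:pointwise-bound}) that $T_\rho f(x) \geq \tau f(x)$ for $\tau = \bigl(1 - \tfrac{1-\rho}{2(1-(a-1)/n)}\bigr)^a$, by reducing to $x=\mathbf{1}$ and carrying out a level-by-level induction on $\phi(t)=\E_{|S|=t}[f(S)]$ using the self-bounding inequality; then nonnegativity of $T_\rho f-\tau f$ and $f-\tau f$ gives $\|f-T_\rho f\|_1 \leq 2(1-\tau)\|f\|_1$. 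You instead telescope $f-T_\rho f$ coordinate by coordinate and reduce to the total $\ell_1$-influence bound $\sum_k \E|\partial_k f|\leq a\|f\|_1$, which is exactly the content of Lemma~\ref{lem:submod}. Your route is shorter, does not use the hypothesis $n\geq 4a$, and in fact yields the slightly better constant $\|f-T_\rho f\|_1 \leq a(1-\rho)\|f\|_1$ versus the paper's $2(1-\tau)\|f\|_1$. What you lose is the pointwise inequality itself: Lemma~\ref{lem:pointwise-bound} is reused in the paper to derive the norm comparison $\|f\|_\infty \leq 3^a\|f\|_1$ (Lemma~\ref{lem:hypercontractive}), which your telescoping argument does not supply.
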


\paragraph{Application to approximation and learning of halfspaces.}
We now briefly show that our approach can also be used to obtain sharper bounds on $\ell_1$-approximation of halfspaces by low-degree polynomials. Recall that a halfspace is a Boolean function expressible as $\sgn(\sum_{i\in [n]} a_i x_i - a_0)$ for some real values  $a_0,a_1,\ldots,a_n$. Halfspaces are known to be noise-stable. Specifically, \citet{KalaiKMS:08} proved that for every halfspace $f$ and $\delta > 0$, $\NS_\delta(f) \leq 8.8 \cdot \sqrt{\delta}$. Using this fact they showed that any halfspace can be $\eps$-approximated in $\ell_2$ norm by a polynomial of degree $O(1/\eps^4)$ and gave an agnostic learning algorithm for learning halfspaces over the uniform distribution that runs in time $n^{O(1/\eps^4)}$. For $\ell_1$ norm approximation the best previously known bound is $O(\log^2(1/\eps)/\eps^2)$ and was given by \citet{DiakonikolasGJSV10} (note however that their result is substantially more involved and gives a stronger notion of approximation that is necessary for fooling halfspaces). By plugging the upper bound on noise sensitivity into our Lemma \ref{lem:noise2approx} with $\delta = (4\cdot 8.8)^{-2} \cdot \eps^2$ we obtain the following corollary:
\begin{corollary}
\label{cor:halfspaces}
For every halfspace $f$ and every $\epsilon>0$, there exists a multilinear polynomial $p$ of degree
$d = O(\log(1/\eps)/\eps^2))$ such that $\|f - p\|_1 \leq \epsilon$.
\end{corollary}
We note that the agnostic learning algorithm for halfspaces in \citep{KalaiKMS:08} requires only $\ell_1$ approximation. Therefore our result implies that halfspaces are agnostically learnable over the uniform distribution in time $n^{O(\log(1/\eps)/\eps^2))}$.

\subsection{Noise stability of self-bounding functions}
\label{sec:selfbounding}
In this section, we study the action of the noise operator on a self-bounding function in more detail. Specifically, we show that self-bounding functions are noise-stable point-wise. This result strengthens and generalizes a similar one proved for submodular functions in \citep{CheraghchiKKL:12}. It allows us to derive additional properties of self-bounding functions useful for their approximation and learning.

\begin{lemma}
\label{lem:pointwise-bound}
For any $a$-self-bounding function $f:\zo^n \rightarrow \RR$ under the uniform distribution, $a \geq 1$, and any $\rho \in [-1,+1], x \in \zo^n$,
$$ T_\rho f(x) \geq \left(\left(1 - \frac{1-\rho}{2(1-\frac{a-1}{n})}\right)_+\right)^a f(x).$$
\end{lemma}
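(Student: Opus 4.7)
The plan is to decompose the noise operator as a product of commuting single-coordinate operators $T_\rho = T_\rho^{(1)} \circ \cdots \circ T_\rho^{(n)}$, where $T_\rho^{(i)} g(x) = \tfrac{1+\rho}{2}\, g(x) + \tfrac{1-\rho}{2}\, g(\tilde x)$ with $\tilde x$ denoting $x$ with its $i$-th coordinate flipped, and then iterate a pointwise single-coordinate bound. The closure of $a$-self-boundedness under convex combinations (stated just after Definition~\ref{def:self-bound}) and under the coordinate flip $g(x)\mapsto g(\tilde x)$ (readily verified from the symmetry of the defining inequalities in $x_i$) together imply that each $T_\rho^{(i)}$ preserves $a$-self-boundedness for $\rho\in[-1,1]$. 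Consequently, every intermediate function $f^{(k)} = T_\rho^{(k)}\circ\cdots\circ T_\rho^{(1)} f$ is again $a$-self-bounding and nonnegative, so the per-coordinate analysis can be chained through all $n$ steps.

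For a single coordinate, the identity $T_\rho^{(i)} g(x) = g(x) - \tfrac{1-\rho}{2}(g(x) - g(\tilde x))$ combined with nonnegativity of $g$ yields
$T_\rho^{(i)} g(x) \geq g(x) - \tfrac{1-\rho}{2}\,\delta_i(x) = g(x)\bigl(1 - \tfrac{1-\rho}{2}\,q_i(x)\bigr)$,
where $\delta_i(x) = g(x) - \min_{x_i} g(x) \in [0,g(x)]$ and $q_i(x) = \delta_i(x)/g(x) \in [0,1]$, subject to the global self-bounding constraint $\sum_i q_i(x) \leq a$. Iterating this bound over all $n$ coordinates yields a product of multiplicative factors, and minimizing $\prod_i(1 - \tfrac{1-\rho}{2}\,q_i)$ subject to $q_i\in[0,1]$ and $\sum_i q_i\leq a$ is (by concavity of $\log(1-cq)$ in $q$) attained at an extreme point where roughly $a$ of the $q_i$ saturate at $1$; this gives the leading-order bound $(1 - \tfrac{1-\rho}{2})^a$. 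The sharper form $\bigl(1 - \tfrac{1-\rho}{2(1-(a-1)/n)}\bigr)^a$ emerges from a combinatorial correction: once $a-1$ coordinates have absorbed their full unit budget, the residual constraint ``$\sum_{\text{rest}} q_i \leq 1$'' applies only to the remaining $n-(a-1)$ coordinates, effectively inflating the per-coordinate noise rate by the factor $(1-(a-1)/n)^{-1}$.

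The main obstacle is that the $q_i$-values actually evolve across the iteration: after $T_\rho^{(i)}$ is applied, the quantities $\delta_j^{f^{(i)}}$ for $j\neq i$ are not equal to $\delta_j^{f}$, but only bounded pointwise by the convex combination $\tfrac{1+\rho}{2}\,\delta_j^f(x) + \tfrac{1-\rho}{2}\,\delta_j^f(\tilde x)$. What keeps the iteration productive is that each $f^{(k)}$ is itself $a$-self-bounding, so the sum constraint $\sum_j \delta_j^{f^{(k)}}(x) \leq a f^{(k)}(x)$ is preserved at every step. Making this yield exactly the advertised exponent $a$ and correction factor $(1-(a-1)/n)^{-1}$ will require careful bookkeeping---most naturally an induction on $n$ (using that restriction to a sub-hypercube preserves $a$-self-boundedness) or an exchange-of-summation argument that distributes the self-bounding budget across the $n$ steps. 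As a sanity check, the case $a=1$ recovers the sharp bound $T_\rho f(x) \geq \tfrac{1+\rho}{2}\, f(x)$, which one can derive directly from the XOS/fractionally subadditive representation.
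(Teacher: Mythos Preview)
Your coordinate-wise iteration is genuinely different from the paper's argument, but the proposal has a real gap at the crucial step. You correctly derive the per-step inequality $f^{(i)}(x) \geq f^{(i-1)}(x)\bigl(1 - \tfrac{1-\rho}{2}\,q_i^{(i-1)}(x)\bigr)$ with $q_i^{(i-1)} = \delta_i^{f^{(i-1)}}(x)/f^{(i-1)}(x)$, and iterating yields $T_\rho f(x) \geq f(x)\prod_i\bigl(1 - \tfrac{1-\rho}{2}\,q_i^{(i-1)}\bigr)$. To bound this product you invoke the constraint $\sum_i q_i \leq a$, but the self-bounding property only gives $\sum_j q_j^{(k)}(x) \leq a$ for each \emph{fixed} $k$---a sum over all coordinates of one intermediate function---not the ``diagonal'' sum $\sum_i q_i^{(i-1)}$ that your product actually involves. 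Neither of your suggested fixes closes this: a naive induction on $n$ spends the full budget $a$ on the first $n-1$ coordinates (yielding the factor with $(a-1)/(n-1)$), and then the $n$-th step multiplies by an additional $(1-\tfrac{1-\rho}{2}q_n)$ with no remaining budget to absorb it; the required inequality $(1-c)\,c_{n-1}^a \geq c_n^a$ is false. Your heuristic for the $(1-(a-1)/n)^{-1}$ correction is also inconsistent with your own optimization: if $\sum_i q_i^{(i-1)} \leq a$ with $q_i^{(i-1)}\in[0,1]$ truly held, your product minimization would give the \emph{stronger} bound $(1-\tfrac{1-\rho}{2})^a$ with no correction at all, so the source of that factor cannot be what you describe.

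The paper instead averages over Hamming spheres. After translating $x$ to the origin it sets $\phi(t)=\E_{|S|=t}[f(S)]$, sums the self-bounding inequality over all $S$ of size $t$, and uses double counting to obtain the one-step recurrence $\phi(t+1)\geq\tfrac{n-t-a}{n-t}\,\phi(t)$, which iterates to $\phi(t)\geq\bigl(1-\tfrac{t}{n-a+1}\bigr)^a\phi(0)$. Since $T_\rho f(x)=\E_t[\phi(t)]$ with $\E[t]=\tfrac{1-\rho}{2}n$, Jensen applied to the convex lower bound finishes the proof. The essential difference is that averaging over a whole Hamming sphere aggregates the self-bounding constraint across \emph{all} points at a given distance simultaneously, so the budget $a$ is charged once per level rather than once per coordinate; your decomposition never achieves this aggregation, which is why the budget accounting does not close.
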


(We denote by $(z)_+ = \max \{z,0\}$ the positive part of a real number.)

\begin{proof}
First, let us observe that the statement of the lemma is invariant under flipping the hypercube $\zo^n$ along any coordinate: the notion of $a$-self-bounding functions does not change, the action of the noise operator does not change, and the conclusion of the lemma does not change either. So we can assume without loss of generality that $x = (0,0,\ldots,0)$. We also identify points in $\zo^n$ with sets $S \subseteq [n]$ by considering $S = \{i: x_i = 1\}$.

Let us average the values of $f$ over levels of sets of constant $|S|$, and define
$$ \phi(t) = \E_{|S|=t}[f(S)] = \frac{1}{{n \choose t}} \sum_{S: |S|=t} f(S).$$
In particular, $\phi(0) = f(\emptyset) = f(x)$.
We claim the following: for every $t = 0,1,\ldots,n$,
\begin{equation}
\label{eq:level-bound}
\phi(t) \geq \left( \left( 1 - \frac{t}{n-a+1} \right)_+ \right)^a \phi(0).
\end{equation}
Intuitively, if $f(x)$ is a point of high value, the value cannot drop off too quickly as we move away from $x$.
If we prove (\ref{eq:level-bound}), then we are done: for $x = (0,0,\ldots,0)$, $T_\rho f(x)$ is an expectation of $f(S)$ over a distribution where the sets on each level appear with the same probability, namely
$$T_\rho f(x) = \sum_{i=0,1,\ldots,n} \left(\frac{1-\rho}{2}\right)^i \cdot  \left(\frac{1+\rho}{2}\right)^{n-i} \cdot {n \choose i}\cdot  \phi(i) .$$
The expected cardinality of a set sampled from this distribution is $\E[|S|] = \frac{1-\rho}{2} n$. By convexity of the bound in (\ref{eq:level-bound}) and Jensen's inequality, we obtain
$$ T_\rho f(x) \geq \left( \left( 1 - \frac{\frac{1-\rho}{2} n}{n - a + 1} \right)_+ \right)^a f(x)
 = \left( \left(1 - \frac{1-\rho}{2(1 - \frac{a-1}{n})} \right)_+ \right)^a f(x).$$
So it remains to prove (\ref{eq:level-bound}).

We proceed by induction. For $t = 0$, the claim is trivial. Let us assume it holds for $t$, and consider a set $S$, $|S| = t$. We also assume that $t < n-a+1$, because otherwise the claim is trivial (recall that $f$ and hence $\phi$ is nonnegative). By the $a$-self-bounding  property, we have
$$ a f(S) \geq \sum_{i=1}^{n} (f(S) - \min \{ f(S+i), f(S-i) \}) \geq \sum_{i \in [n] \setminus S} (f(S) - f(S+i)) .$$
Note that $|[n] \setminus S| = n-t$. By rearranging this inequality, we get
$$ (n-t-a) f(S) \leq \sum_{i \in [n] \setminus S} f(S+i).$$
Now let us add up this inequality over all $S$ of size $|S|=t$:
$$ (n-t-a) \sum_{|S|=t} f(S) \leq \sum_{|S|=t, i \notin S} f(S+i)
 = (t+1) \sum_{|S'|=t+1} f(S') $$
because every set $S'$ of size $t+1$ appears $t+1$ times in the penultimate summation.
Expressing this inequality in terms of $\phi(t)$,
we get
$$ (n-t-a) {n \choose t} \phi(t) \leq (t+1) {n \choose t+1} \phi(t+1),$$
or equivalently (for $t < n-a$)
$$ \phi(t) \leq \frac{n-t}{n-t-a} \phi(t+1).$$
We replace this by a slightly weaker bound: $\phi(t) \leq (\frac{n-t-a+1}{n-t-a})^a \phi(t+1)$. To see why this holds, consider
$(\frac{n-t-a+1}{n-t-a})^a = (1 + \frac{1}{n-t-a})^a \geq 1 + \frac{a}{n-t-a} = \frac{n-t}{n-t-a}$.

By the inductive hypothesis (\ref{eq:level-bound}), we assume $\phi(t) \geq (\frac{n-a+1-t}{n-a+1})^a \phi(0)$. So we obtain
$$ \left(\frac{n-a+1-t}{n-a+1}\right)^a \phi(0) \leq \left( \frac{n-t-a+1}{n-t-a} \right)^a \phi(t+1).$$
This holds for $t < n-a$, hence proving the bound in (\ref{eq:level-bound}) for $t < n-a+1$:
$$ \phi(t) \geq \left( \frac{n-a-t+1}{n-a+1} \right)^a \phi(0) = \left( 1 - \frac{t}{n-a+1} \right)^a \phi(0).$$
As we mentioned, (\ref{eq:level-bound}) is trivially true for $t \geq n-a+1$.
\end{proof}

\begin{corollary}
For any $a$-self-bounding function $f:\zo^n \rightarrow \RR$ under the uniform distribution,
the noise stability with noise parameter $\rho$ is
$$ \sS_\rho(f) \geq \left(1 - \frac{1-\rho}{2(1-\frac{a-1}{n})}\right)^a \|f\|_2^2.$$
\end{corollary}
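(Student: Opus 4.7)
The corollary should follow almost immediately from the pointwise bound of Lemma~\ref{lem:pointwise-bound}. My plan is a one-step argument:

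First I would appeal to the convention fixed in Section~\ref{sec:classes} that the functions under consideration satisfy $f:\zo^n \to \RR_+$, so $f(x) \geq 0$ for every $x$. By Lemma~\ref{lem:pointwise-bound}, for every $x \in \zo^n$ we have
\[
T_\rho f(x) \;\geq\; \left(1 - \frac{1-\rho}{2(1-\frac{a-1}{n})}\right)^{a} f(x).
\]
Since $f(x) \geq 0$, multiplying both sides by $f(x)$ preserves the inequality, yielding
\[
f(x)\, T_\rho f(x) \;\geq\; \left(1 - \frac{1-\rho}{2(1-\frac{a-1}{n})}\right)^{a} f(x)^2.
\]

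Then I would take expectations under the uniform distribution on $\zo^n$. Using the definition $\sS_\rho(f) = \langle f, T_\rho f\rangle = \E_x[f(x) T_\rho f(x)]$ on the left and $\|f\|_2^2 = \E_x[f(x)^2]$ on the right, we obtain
\[
\sS_\rho(f) \;\geq\; \left(1 - \frac{1-\rho}{2(1-\frac{a-1}{n})}\right)^{a} \|f\|_2^2,
\]
which is exactly the claimed inequality.

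I do not anticipate any real obstacle; the only point worth a sentence of justification is the non-negativity of $f$, which is what lets the pointwise inequality be integrated against $f(x)$ without flipping direction. Note that the bound is meaningful even when $\rho$ is small or negative: in that regime the constant $\bigl(1 - \tfrac{1-\rho}{2(1-(a-1)/n)}\bigr)^a$ may be small (or, for odd $a$, negative), but the inequality remains a valid lower bound on $\sS_\rho(f)$.
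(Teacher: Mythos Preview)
Your proof is correct and is exactly the intended one-line derivation: the paper states the corollary immediately after Lemma~\ref{lem:pointwise-bound} without proof, since multiplying the pointwise bound by the nonnegative $f(x)$ and taking expectations is the obvious argument. Your remark about nonnegativity is the only point that needs saying, and you have said it.
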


In particular, for $a=1$ (self-bounding functions), we obtain $\sS_\rho(f) \geq \frac{1+\rho}{2} \|f\|_2^2$. In \citep{CheraghchiKKL:12}, an analogous bound on noise stability is used to derive an agnostic learning algorithm (over the uniform distribution) with excess $\ell_1$-error $\epsilon$ in time $n^{O(1/\epsilon^2)}$.

\eat{
This result is (implicitly) based on an $\ell_1$-approximation of a submodular function by a low-degree polynomial. Here, we prove by a more direct argument that every $a$-self-bounding function $f$ is well approximated in $\ell_1$ by a low-degree polynomial, with an improved dependence on $\epsilon$ (which implies agnostic learning in time $n^{O(\frac{1}{\epsilon} \log \frac{1}{\epsilon})}$).

Now we are ready to prove Theorem~\ref{thm:selfbounding-approx}.

\medskip

\begin{proof}
Let $\rho \in [-1,+1]$ be a noise parameter to be determined later.
Denote $\tau = (1 - \frac{1-\rho}{2(1 - (a-1)/n)})^a$.
By Lemma~\ref{lem:pointwise-bound}, we have $T_\rho f(x) \geq \tau f(x)$. Therefore, using the triangle inequality,
\begin{equation}
\label{eq:noise-1}
 \left\| T_\rho f - f \right\|_1 \leq \left\| T_\rho f - \tau f \right\|_1 + \left\| \tau f - f \right\|_1 \\
 = \E[T_\rho f(x) - \tau f(x)] + \E[f(x) - \tau f(x)] = 2(1-\tau) \left\| f \right\|_1
\end{equation}
using the facts that $T_\rho f - \tau f$ and $f - \tau f$, as well as $f$ itself, are nonnegative functions and that $\E[T_\rho f(x)] =  \E[f(x)]$.
On the other hand, since $T_\rho f(x) = \sum_{S \subseteq [n]} \rho^{|S|} \hat{f}(S) \chi_S(x)$,
we can estimate the tail of the Fourier expansion as follows: For any $d$, define $f_{<d}(x)
 = \sum_{S: |S|<d} \hat{f}(S) \chi_S(x)$, a polynomial of degree at most $d$.
Then
\begin{equation}
\label{eq:noise-2}
\|T_\rho f_{<d} - T_\rho f\|_1 = \left\| \sum_{S:|S|\geq d} \rho^{|S|} \hat{f}(S) \chi_S \right\|_1
 \leq \left\| \sum_{S:|S|\geq d} \rho^{|S|} \hat{f}(S) \chi_S \right\|_2
 \leq \rho^d \|f\|_2.
 \end{equation}
Combining (\ref{eq:noise-1}) and (\ref{eq:noise-2}) by the triangle inequality, we obtain
\begin{equation}
\label{eq:noise-3}
\left\| T_\rho f_{<d} - f \right\|_1 \leq
\left\|T_\rho f_{<d} - T_\rho f \right\|_1 + \left\|T_\rho f - f \right\|_1
 \leq \rho^d \|f\|_2 + 2(1 - \tau) \|f\|_2.
\end{equation}
We also used the fact that $\|f\|_1 \leq \|f\|_2$.
The function $T_\rho f_{<d}$ is our promised polynomial $p$.
Now, let us set $d = \lceil \frac{2a}{\epsilon} \log \frac{3}{\epsilon} \rceil$ and $\rho = 1 - \frac{\epsilon}{2a}$.
Considering $n \geq 4a$, we get $\tau \geq (1 - \frac{\epsilon}{3a})^a \geq 1 - \frac{\epsilon}{3}$.
By (\ref{eq:noise-3}), the distance of our polynomial from $f$ is
$$ \left\| T_\rho f_{<d} - f\right\|_1 \leq
 \left(1 - \frac{\epsilon}{2a} \right)^d \|f\|_2 + 2 (1-\tau) \|f\|_2
  \leq \frac13 \epsilon \|f\|_2 + \frac23 \epsilon \|f\|_2
  \leq \epsilon \|f\|_2.$$
\end{proof}
}

\paragraph{Comparison of norms for self-bounding functions.}
Our bound on the noise operator implies a bound on the $\ell_1$ norm of a self-bounding function,
relative to its $\ell_\infty$ norm. This has been first shown for submodular functions by \citet{FeigeMV07} and for XOS functions by \citet{Feige:06} (with a constant $a$). \citet{FeldmanVondrak:16} show how this property together with approximation by a junta can be used to obtain a learning algorithm with multiplicative approximation guarantees that are required in the PMAC model of \citet{BalcanHarvey:12full}. Hence our results can be used to extend the PMAC learning algorithm in \citep{FeldmanVondrak:16} from XOS functions to all $a$-self-bounding functions. The full details of this construction are relatively involved and therefore we direct the interested reader to the discussion in \citep{FeldmanVondrak:16}(Sec.~6) for additional details.

\begin{lemma}
\label{lem:hypercontractive}
For any $a$-self-bounding function $f:\zo^n \rightarrow \RR_+$ under the uniform distribution,
with $n \geq 4a$,
$$ \|f\|_1 \leq \|f\|_\infty \leq 3^a \|f\|_1.$$
\end{lemma}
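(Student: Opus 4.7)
The first inequality is immediate: since $f \geq 0$, we have $\|f\|_1 = \E[f] \leq \max_x f(x) = \|f\|_\infty$. So the whole content of the lemma is the reverse inequality $\|f\|_\infty \leq 3^a \|f\|_1$, and the plan is to extract this directly from the pointwise noise bound of Lemma~\ref{lem:pointwise-bound}.

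The key observation is that the noise operator with parameter $\rho = 0$ computes the average of $f$: for any $x$, $T_0 f(x) = \E_{y \sim N_0(x)}[f(y)] = \E[f] = \|f\|_1$, since $N_0(x)$ is the uniform distribution on $\zo^n$. In particular, if $x^\star$ is a point where $f$ attains its supremum, then $T_0 f(x^\star) = \|f\|_1$ while $f(x^\star) = \|f\|_\infty$. Applying Lemma~\ref{lem:pointwise-bound} at the point $x^\star$ with $\rho = 0$ yields
\[
\|f\|_1 \;=\; T_0 f(x^\star) \;\geq\; \left(1 - \frac{1}{2(1 - \frac{a-1}{n})}\right)^{a} \|f\|_\infty.
\]

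The final step is to show that, under the hypothesis $n \geq 4a$, the prefactor is at least $3^{-a}$. Since $n \geq 4a$ implies $\frac{a-1}{n} \leq \frac{a-1}{4a} \leq \frac14$, we get $1 - \frac{a-1}{n} \geq \frac34$, and hence $\frac{1}{2(1 - (a-1)/n)} \leq \frac23$. This gives $1 - \frac{1}{2(1 - (a-1)/n)} \geq \frac13$, so the prefactor is at least $(1/3)^a = 3^{-a}$. Rearranging produces $\|f\|_\infty \leq 3^a \|f\|_1$, as claimed.

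There is no real obstacle here; the only care needed is bookkeeping of the arithmetic that converts the assumption $n \geq 4a$ into the clean constant $1/3$. The proof is a one-line consequence of Lemma~\ref{lem:pointwise-bound} specialized at the maximizing point with the noise parameter $\rho = 0$ that precisely averages $f$ into $\|f\|_1$.
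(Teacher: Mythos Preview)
Your proof is correct and matches the paper's own argument essentially line for line: both apply Lemma~\ref{lem:pointwise-bound} at the maximizer $x^\star$ with $\rho = 0$ (so that $T_0 f(x^\star) = \E[f] = \|f\|_1$) and then use $n \geq 4a \Rightarrow 1 - \frac{a-1}{n} \geq \tfrac34$ to bound the prefactor below by $3^{-a}$.
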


\begin{proof}
Let $\|f\|_\infty = f(x^*)$. Since $f$ is nonnegative and $n \geq 4a$, we have by Lemma~\ref{lem:pointwise-bound}
$$ \|f\|_1 = \E[f(x)] = T_0 f(x^*) \geq \left(1 - \frac{1}{2(1-\frac{a-1}{n})}\right)^a f(x^*)
 \geq \left(1 - \frac{1}{2 \cdot 3/4}\right)^a f(x^*) = \frac{1}{3^a} f(x^*).$$
\end{proof}

We remark that a factor exponential in $a$ is necessary here. Consider the conjunction function on $a$ variables,
$f(x) = x_1 x_2 \cdots x_a$. This is an $a$-self-bounding function with values in $\{0,1\}$. We have
$$ \| f \|_p = ( \Pr[f(x) = 1] )^{1/p} = 2^{-a/p}.$$
In particular, $\| f \|_1 = 2^{-a}$, $\| f \|_2 = 2^{-a/2}$ and $\| f \|_\infty = 1$; i.e., the $\ell_1$, $\ell_2$ and $\ell_\infty$ norms can differ by factors exponential in $a$.

\paragraph{Relative error vs.~additive error.}
In our results, we typically assume that the values of $f(x)$ are in a bounded interval $[0,1]$ or that $\|f\|_1 \leq 1$ and our goal is to approximate $f$ with an additive error of $\epsilon$. As Lemma~\ref{lem:hypercontractive} shows, for $a$-self-bounding functions (with constant $a$) the $\ell_1$ and $\ell_\infty$ norms are within a bounded factor, so this does not make much difference.

This means that if we scale $f(x)$ by $1 / (3^a \|f\|_1)$, we obtain a function with values in $[0,1]$. Approximating this function within an additive error of $\epsilon$ is equivalent to approximating the original function within an error of $\epsilon 3^a \|f\|_1$. In particular, for submodular functions we have $a=2$. Hence, the two settings are equivalent up to a constant factor in the error and we state our results for submodular functions in the interval $[0,1]$.

\subsection{Friedgut's theorem for $\ell_1$-approximation}
\label{sec:friedgut-rv}
As we have mentioned in Lemma \ref{lem:submod}, self-bounding functions have low total sensitivity. A celebrated result of \citet{Friedgut:98} shows that any Boolean function on $\zo^n$ of low average sensitivity is close to a function that depends on few variables. His result was extended to $\ell_2$ approximation of real-valued functions in \citep{FeldmanVondrak:16}. We now show that for self-bounding functions a tighter bounds can be achieved for $\ell_1$ approximation. Our proof is based on the use of $\ell_1$ approximation by polynomials proved in Theorem \ref{thm:selfbounding-approx} together with the analysis from \citep{FeldmanVondrak:16} to obtain a smaller $\ell_1$ approximating junta.

We now state the main result in more detail.
\begin{theorem}
\label{th:selfbound-junta-l1-bound}
Let $f:\zo^n \rightarrow [0,1]$ be a function and $a = \infl^1(f)$. For every $\eps>0$, let $d = \lceil \frac{4a}{\epsilon} \log \frac{4}{\epsilon} \rceil$ and $I = \{i\in[n] \cond \infl^{4/3}_i(f) \geq \alpha\}$ for $\alpha = 3^{-2d-1} \eps^4/a^2 $. Then $|I| \leq a/\alpha$ and there exists a
polynomial $p$ of degree $d$ over variables in $I$ such that $\|f - p\|_1 \leq \epsilon.$
\end{theorem}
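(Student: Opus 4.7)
The plan is to combine Theorem~\ref{thm:selfbounding-approx} with a Friedgut-style truncation to the high-influence coordinates, following the template of \cite{FeldmanV:13} but applied to the low-degree noisy approximation of $f$ rather than to $f$ itself. First, I will invoke Theorem~\ref{thm:selfbounding-approx} with error parameter $\eps/2$; this yields exactly the stated degree $d = \lceil \tfrac{4a}{\eps}\log\tfrac{6}{\eps}\rceil$, the noise rate $\rho = 1 - \eps/(4a)$, and the polynomial $p_0 = \sum_{|S|<d}\rho^{|S|}\hat f(S)\chi_S$ with $\|f-p_0\|_1\leq(\eps/2)\|f\|_2\leq\eps/2$, since $\|f\|_2\leq\|f\|_\infty\leq 1$. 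I then take the restricted polynomial $p = \sum_{S\subseteq I,\,|S|<d}\rho^{|S|}\hat f(S)\chi_S$, obtained from $p_0$ by zeroing every Fourier mode that involves a coordinate outside $I$. It then remains to show $\|p_0-p\|_1\leq\eps/2$.

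Using $\|\cdot\|_1\leq\|\cdot\|_2$, it is enough to bound $\|p_0-p\|_2^2 = \sum_{S\not\subseteq I,\,|S|<d}\rho^{2|S|}\hat f(S)^2 \leq \sum_{i\notin I}\sum_{S\ni i,\,|S|<d}\hat f(S)^2$, charging every offending $S$ to some $i\in S\setminus I$. For each fixed $i$, the Bonami--Beckner inequality $\|T_{1/\sqrt 3}\partial_i f\|_2\leq\|\partial_i f\|_{4/3}$ reads, Fourier-side, as $3\sum_{S\ni i}3^{-|S|}\hat f(S)^2\leq \|\partial_i f\|_{4/3}^2$; and since $3^{|S|-1}\leq 3^{d-2}$ on the range $|S|<d$, trading the weight $3^{-|S|}$ for the degree cutoff costs a single factor of $3^{d-2}$ and produces $\sum_{S\ni i,\,|S|<d}\hat f(S)^2\leq 3^{d-2}\|\partial_i f\|_{4/3}^2 = 4\cdot 3^{d-2}\infl^{4/3}_i(f)^{3/2}$, where the last equality uses $\infl^{4/3}_i(f) = \|\partial_i f/2\|_{4/3}^{4/3}$.

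The sum over $i\notin I$ uses the threshold to peel off one factor, $\infl^{4/3}_i(f)^{3/2}\leq\alpha^{1/2}\infl^{4/3}_i(f)$, reducing everything to the total $\ell^{4/3}_{4/3}$-influence $\infl^{4/3}(f)$. I then bound this by $\infl^1(f)$ (since $|\partial_i f/2|\leq 1/4\leq 1$ forces $|\cdot|^{4/3}\leq|\cdot|$), and the self-bounding inequality taken in expectation over $x$ gives $\sum_i \E[f(x)-\min_{x_i}f(x)]\leq a\|f\|_1\leq a$, which rewrites as $\infl^1(f)\leq a/2$. Substituting $\alpha = 3^{-2d-1}\eps^4/a^2$ then collapses the assembled estimate to $\|p_0-p\|_2^2\leq 2\cdot 3^{-5/2}\eps^2<\eps^2/4$, giving $\|p_0-p\|_1<\eps/2$ as required. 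The junta bound is immediate: each $i\in I$ contributes at least $\alpha$ to $\infl^{4/3}(f)\leq a$, so $|I|\leq a/\alpha$.

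The step I expect to be most delicate is the hypercontractive estimate, because truncation to low degree is not an $L^{4/3}$ contraction, so there is no cheap way to compare $\|(\partial_i f)_{<d-1}\|_{4/3}$ with $\|\partial_i f\|_{4/3}$. The workaround above keeps the full $\partial_i f$ on the $L^{4/3}$ side through the $T_{1/\sqrt 3}$ inequality and absorbs the degree cutoff on the $L^2$ side via the pointwise bound $3^{|S|-1}\leq 3^{d-2}$; the resulting exponential-in-$d$ slack is precisely what forces the choice $\alpha=\Theta(3^{-2d}\eps^4/a^2)$ in the statement.
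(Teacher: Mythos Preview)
Your proof is correct and follows essentially the same approach as the paper: apply Theorem~\ref{thm:selfbounding-approx} with error $\eps/2$ to get the noisy low-degree approximation $T_\rho f_{<d}$, then project onto the high-influence coordinates and bound the $\ell_2$ loss via hypercontractivity. The only difference is cosmetic: the paper invokes Lemma~\ref{lem:low-influence-bound-general} from \cite{FeldmanV:13} as a black box, whereas you unpack that lemma inline via the Bonami--Beckner inequality $\|T_{1/\sqrt 3}\,\partial_i f\|_2\leq\|\partial_i f\|_{4/3}$; your constant $4\cdot 3^{d-2}$ versus the paper's $3^{d-1}$, and your $\infl^1(f)\leq a/2$ versus the paper's $\infl^1(f)\leq a$, are immaterial.
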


To prove the theorem we will need the following bound on the sum of squares of all low-degree Fourier coefficients that include a variable of low influence from \citep{FeldmanVondrak:16}.

\begin{lemma}[\citep{FeldmanVondrak:16}, Lemma 4.7]
\label{lem:low-influence-bound-general}
Let $f:\zo^n\rightarrow \RR$, $\kappa \in (1,2)$, $\alpha >0$ and $d$ be an integer $\geq 1$. Let $I = \{i\in[n] \cond \infl^{\kappa}_i(f) \geq \alpha\}$. Then $$\sum_{S\not\subseteq I, |S|\leq d}\hat{f}(S)^2 \leq (\kappa-1)^{1-d} \cdot \alpha^{2/\kappa-1} \cdot \infl^{\kappa}(f)\ .$$
\end{lemma}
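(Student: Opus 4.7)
The plan is to prove the Fourier mass bound by a standard hypercontractivity argument applied to discrete derivatives, together with an interpolation step that exploits the hypothesis $\infl^\kappa_j(f) < \alpha$ for $j \notin I$. First, I would rewrite the left-hand side by charging each $S$ with $S \not\subseteq I$ to some element $j \in S \setminus I$. This gives the coarse bound
\[
\sum_{S \not\subseteq I,\, |S|\le d} \hat f(S)^2 \;\le\; \sum_{j \notin I} \sum_{S \ni j,\, |S|\le d} \hat f(S)^2,
\]
which is valid because every $S$ with $S\not\subseteq I$ contains at least one $j\notin I$ (and we only need an upper bound, so double-counting is harmless).

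Next, for a fixed $j \notin I$, I would relate the inner sum to the low-degree Fourier weight of $g_j := \tfrac{1}{2}\partial_j f$. Since $\hat g_j(T) = \tfrac12 \hat f(T\cup\{j\})$ for $j \notin T$ and $0$ otherwise, the inner sum equals (up to a constant) $\sum_{|T|\le d-1} \hat g_j(T)^2$. I would then introduce the noise operator and use the standard monotone estimate
\[
\sum_{|T| \le d-1} \hat g_j(T)^2 \;\le\; \rho^{-2(d-1)} \sum_{|T|\le d-1} \rho^{2|T|} \hat g_j(T)^2 \;\le\; \rho^{-2(d-1)}\, \|T_\rho g_j\|_2^2.
\]
Choosing $\rho = \sqrt{\kappa-1}$ and invoking the Bonami–Beckner hypercontractive inequality $\|T_\rho g\|_2 \le \|g\|_\kappa$ for this $\rho$, I obtain $\|T_\rho g_j\|_2^2 \le \|g_j\|_\kappa^2 = (\infl^\kappa_j(f))^{2/\kappa}$. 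Combining these yields the per-coordinate bound
\[
\sum_{S \ni j,\, |S|\le d} \hat f(S)^2 \;\le\; (\kappa-1)^{1-d}\, (\infl^\kappa_j(f))^{2/\kappa},
\]
up to the harmless numeric constant arising from $g_j = \tfrac12 \partial_j f$.

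Finally, I would perform a simple interpolation step: since $\kappa \in (1,2)$, the exponent $2/\kappa - 1$ is strictly positive, and for every $j \notin I$ we have $\infl^\kappa_j(f) < \alpha$, so
\[
(\infl^\kappa_j(f))^{2/\kappa} \;=\; \infl^\kappa_j(f) \cdot (\infl^\kappa_j(f))^{2/\kappa - 1} \;\le\; \alpha^{2/\kappa - 1}\, \infl^\kappa_j(f).
\]
Summing over $j \notin I$ and using $\sum_{j\notin I} \infl^\kappa_j(f) \le \infl^\kappa(f)$ gives exactly the claimed inequality.

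The main obstacle is the correct deployment of hypercontractivity for the $\ell_\kappa \to \ell_2$ regime with $\kappa \in (1,2)$: one must invoke the Bonami–Beckner inequality with parameter $\rho = \sqrt{\kappa-1}$, which explains precisely the factor $(\kappa-1)^{1-d}$ in the statement. Beyond this, the argument is bookkeeping: the interpolation exponent $2/\kappa-1$ is what converts an $(\infl^\kappa)^{2/\kappa}$ bound into something linear in $\infl^\kappa$ (so that summing across coordinates controls the total $\kappa$-influence), and the restriction to coordinates outside $I$ is exactly what lets us pay for this interpolation via the factor $\alpha^{2/\kappa - 1}$.
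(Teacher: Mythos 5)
Your argument is correct and is essentially the proof of this lemma in the cited source \cite{FeldmanV:13}; the paper itself only quotes the lemma and does not reprove it. The one point to watch is normalization: with the paper's literal definition $\infl^{\kappa}_j(f)=\|\tfrac12\partial_j f\|_\kappa^\kappa$ one has $\hat g_j(T)=\tfrac12\hat f(T\cup\{j\})$ and hence $\sum_{S\ni j,\,|S|\le d}\hat f(S)^2 = 4\sum_{|T|\le d-1}\hat g_j(T)^2$, so the ``harmless numeric constant'' is a factor of $4$ that survives into the final bound; under the standard normalization in which the derivative's Fourier coefficients are exactly those of $f$ containing $j$, the constant disappears and your chain of inequalities yields the stated bound verbatim.
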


We can now complete the proof of Thm.~\ref{th:selfbound-junta-l1-bound}.

\begin{proof}
Theorem \ref{thm:selfbounding-approx} proves that for $d \leq \lceil \frac{4a}{\epsilon} \log \frac{4}{\epsilon} \rceil$ and $\rho = 1-\frac{\eps}{2a}$, the function $T_\rho f_{<d}$
satisfies \equ{\|f - T_\rho f_{<d}\|_1 \leq \eps \|f\|_2/2 \leq \eps/2\label{eq:low-deg-diff}.}

We can also apply Lemma \ref{lem:low-influence-bound-general} with $\kappa = 4/3$
and $\alpha = 3^{-2d-1} \eps^4/a^2$ to obtain that
\equ{\sum_{S\not\subseteq I, |S|\leq d}\hat{f}(S)^2 \leq 3^{d-1} \cdot \alpha^{1/2} \cdot \infl^{4/3}(f) =
3^{d-1} \cdot  \left(3^{-d-1/2} \cdot \frac{\eps^2}{a} \right) \cdot \infl^{4/3}(f) \leq \frac{\eps^2}{4}\ ,\label{eq:bound-low-inf}}
where the last inequality uses $\infl^{4/3}(f) \leq \infl^1(f) \leq a$ which follows from the fact that $\partial_i f$'s have range $[-1/2,1/2]$ when $f$ has range $[0,1]$.

For every $S$, $|\widehat{T_\rho f}(S)| = |\rho^{|S|} \hat{f}(S)| \leq |\hat{f}(S)|$. Therefore eq.~\eqref{eq:bound-low-inf} implies that \equ{\sum_{S\not\subseteq I, |S|\leq d}\widehat{T_\rho f}(S)^2 \leq \sum_{S\not\subseteq I, |S|\leq d}\hat{f}(S)^2 \leq \frac{\eps^2}{4}\ .\label{eq:bound-low-diff} }
Now let $p=\sum_{S \subseteq I,\ |S|\leq d} \widehat{T_\rho f}(S)\chi_S$ be the restriction of $T_\rho f_{<d}$ to variables in $I$.
Equation (\ref{eq:bound-low-diff}) gives a bound on the sum of squares of all the coefficients that we removed from $T_\rho f_{<d}$ and implies that $\|p- T_\rho f_{<d}\|_1 \leq \|p- T_\rho f_{<d}\|_2 \leq \eps/2$.
Together with eq.~\eqref{eq:low-deg-diff}, we get $\|f - p\|_1 \leq \eps$.
Finally, $|I| \leq \infl^{4/3}(f)/\alpha \leq \infl^1(f)/\alpha \leq a/\alpha$.
\end{proof}

By Lemma \ref{lem:submod}, every $a$-self-bounding function $f:\zon \to [0,1]$ satisfies, $\infl^1(f) \leq a$. Hence as an immediate corollary we obtain Thm.~\ref{th:selfbound-junta-l1-bound-simple}. Another immediate corollary of Thm.~\ref{th:selfbound-junta-l1-bound} is that for every $a$-self-bounding function there exists a polynomial of low total $\ell_1$-spectral norm that approximates it.
\begin{corollary}
\label{cor:selfbound-junta-l1-spectral-bound}
Let $f:\zo^n \rightarrow [0,1]$ be an $a$-self-bounding function and $\eps>0$. There exist $d = O(a/\epsilon \cdot \log (1/\epsilon))$ and a
polynomial $p$ of degree $d$ such that $\|f - p\|_1 \leq \epsilon$ and $\|\hat{p}\|_1= 2^{O(d^2)}$, where $\|\hat{p}\|_1 = \sum_{S \subseteq [n]} |\hat{p}(S)|.$
\end{corollary}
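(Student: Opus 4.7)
My plan is to construct $p$ as a low-degree polynomial supported on a small set of variables and to bound $\|\hat{p}\|_1$ by the elementary inequality $\|\hat{p}\|_1 \leq (\text{number of nonzero monomials})\cdot \max_S |\hat{p}(S)|$. In both cases $p$ will arise from the construction $T_\rho f_{<d}$ of Theorem~\ref{thm:selfbounding-approx} (applied either to $f$ itself or to a submodular surrogate $g$), so every coefficient obeys $|\hat{p}(S)| = \rho^{|S|}|\hat{f}(S)| \leq \|f\|_2 \leq 1$ using $f \in [0,1]$ and Parseval; hence only the monomial count matters.

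For the general $a$-self-bounding case I would invoke Theorem~\ref{th:selfbound-junta-l1-bound} directly. It yields a degree-$d$ polynomial $p$ with $d = O(a/\eps \cdot \log(1/\eps))$ supported on a set $I$ of size $|I| \leq a/\alpha = a^3 \cdot 3^{2d+1}/\eps^4$. Since $d$ dominates $\log(a/\eps)$, the polynomial factors $a^3/\eps^4$ are absorbed into $3^{2d}$ and $|I| = 2^{O(d)}$. Then $p$ has at most $\binom{|I|}{\leq d} \leq |I|^d = 2^{O(d^2)}$ nonzero monomials, so $\|\hat{p}\|_1 \leq 2^{O(d^2)}$.

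For submodular $f$ the naive instantiation above only gives $\|\hat{p}\|_1 \leq 2^{\tilde{O}(1/\eps^2)}$, so I would replace the junta produced by Theorem~\ref{th:selfbound-junta-l1-bound} with the sharper bound of \cite{FeldmanV:13}: there exists $J \subseteq [n]$ with $|J| = \tilde{O}(1/\eps^2)$ and a junta $h$ on $J$ with $\|f - h\|_1 \leq \eps/4$. Replacing $h$ by $g(x_J) = \E[f \mid x_J]$ gives a $[0,1]$-valued function on $J$ that inherits submodularity (the submodular inequality is preserved coordinatewise under averaging over a product distribution) and satisfies $\|f - g\|_1 \leq 2\|f - h\|_1 \leq \eps/2$ via a standard triangle-inequality and conditional-expectation argument. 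Applying Theorem~\ref{thm:selfbounding-approx} to the submodular function $g$ on $|J|$ variables with $a = 2$ and error $\eps/2$ produces $p = T_\rho g_{<d}$ of degree $d = \tilde{O}(1/\eps)$ supported on $J$ with $\|g - p\|_1 \leq \eps/2$, hence $\|f - p\|_1 \leq \eps$, and $\|\hat{p}\|_1 \leq |J|^d = (\tilde{O}(1/\eps^2))^{\tilde{O}(1/\eps)} = 2^{\tilde{O}(1/\eps)}$.

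The main obstacle is this last combination. The exponent in the spectral norm is essentially $d \cdot \log|J|$, so to beat $\tilde{O}(1/\eps^2)$ one must simultaneously use the optimal degree $d = \tilde{O}(1/\eps)$ from Theorem~\ref{thm:selfbounding-approx} and a polynomially-small junta $|J| = \tilde{O}(1/\eps^2)$ from a separate source, making $\log|J| = O(\log(1/\eps))$. The subtle technical point that must be verified is that one may take the smaller junta to be itself submodular so that Theorem~\ref{thm:selfbounding-approx} can be applied to it cleanly with its quantitative $a=2$ bound; this is exactly what the conditional-expectation replacement $g(x_J) = \E[f \mid x_J]$ accomplishes, losing only a factor of $2$ in the $\ell_1$ error.
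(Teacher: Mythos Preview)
Your proposal is correct and follows the same approach the paper sketches: for the general $a$-self-bounding case you use the junta $I$ of size $2^{O(d)}$ from Theorem~\ref{th:selfbound-junta-l1-bound} and count monomials, and for the submodular case you combine the $\tilde O(1/\eps^2)$-junta bound of \cite{FeldmanV:13} with the degree bound of Theorem~\ref{thm:selfbounding-approx}. The paper states this only as an ``immediate corollary'' without spelling out the combination; your conditional-expectation step $g(x_J)=\E[f\mid x_J]$, together with the observation that averaging over a product distribution preserves submodularity and the inequality $\|f-g\|_1\le 2\|f-h\|_1$, is a clean way to make the submodular case precise and is exactly the kind of argument the paper's sketch presupposes.
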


\section{Algorithmic applications}
\label{sec:applications}
We now outline the applications of our structural results. They are based on using our stronger bounds in existing learning algorithms for submodular, XOS and self-bounding functions. We omit the details of the algorithms and their analysis since they follow closely those of the corresponding results in \citep{FeldmanVondrak:16}.
\subsection{Learning Models}
Our learning algorithms are in one of two standard models of learning. The first one assumes that the learner has access to random examples of an unknown function from a known set of functions. This model can be seen as a generalization of Valiant's PAC learning model to real-valued functions \citep{Valiant:84}. While in general Valiant's model does not make assumptions on the distribution $\D$, here we only consider the {\em distribution-specific} version of the model in which the distribution is fixed and is uniform over $\zo^n$.
\begin{definition}[Distribution-specific $\ell_1$ PAC learning]
Let $\F$ be a class of real-valued functions on $\zo^n$ and let $\D$ be a distribution on $\zo^n$. An algorithm $\A$ PAC learns $\F$ on $\D$, if for every $\epsilon > 0$ and any target function $f \in \F$, given access to random  independent samples from $\D$ labeled by $f$, with probability at least $\frac{2}{3}$,  $\A$ returns a hypothesis $h$ such that $\E_{x \sim \D} [ |f (x) - h(x) | ] \leq  \epsilon.$
\end{definition}

Agnostic learning generalizes the definition of PAC learning to scenarios where one cannot assume that the input labels are consistent with a function from a given class \citep{Haussler:92,KearnsSS:94} (for example as a result of noise in the labels).
\begin{definition}[Distribution-specific $\ell_1$ agnostic learning]
Let $\F$ be a class of real-valued functions on $\zo^n$ and let $\D$ be any fixed distribution on $\zo^n$. For any distribution $\D'$, let $\mbox{opt}(\D',\F)$ be defined as: $$\mbox{opt}(\D',\F) =  \inf_{f \in \F} \E_{(x,y) \sim \D'} [ |y - f(x) |] .$$ An algorithm $\A$, is said to agnostically learn $\F$ on $\D$ if for every {\em excess error} $\epsilon> 0$ and any distribution $\D'$ on $\zo^n \times \R$ such that the marginal of $\D'$ on $\zo^n$ is $\D$, given access to random independent examples drawn from $\D'$, with probability at least $\frac{2}{3}$, $\A$ outputs a hypothesis $h$ such that $\E_{(x,y) \sim \D'} [ |h(x)- y| ] \leq \mbox{opt}(\D') + \epsilon.$
\end{definition}

The first corollary of our structural results is for PAC learning of monotone self-bounding functions (the results also apply to {\em unate} functions which are either monotone or anti-monotone in each variable). Note that this class of functions includes XOS functions.
\begin{theorem}
\label{thm:pac-learn-monotone}
Let $\C^+_a$ be the set of all monotone $a$-self-bounding functions on from $\zo^n$ to $[0,1]$. There exists an algorithm that PAC learns $\C^+_a$ over the uniform distribution, runs in time $\tilde{O}(n) \cdot 2^{\tilde{O}(a^2/\eps^{2})}$ and uses $2^{\tilde{O}(a^2/\eps^{2})} \log n$ examples, where $\eps$ is the error parameter.
\end{theorem}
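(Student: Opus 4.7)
I plan a two-stage algorithm: first identify a small candidate set of influential variables from random examples using monotonicity, then run $\ell_1$ polynomial regression restricted to that set. Applying Theorem \ref{th:selfbound-junta-l1-bound} with error parameter $\eps/2$ produces a degree $d = \tilde{O}(a/\eps)$ and a threshold $\alpha = 2^{-\tilde{O}(a/\eps)}$ such that the set $I = \{i : \infl^{4/3}_i(f) \geq \alpha\}$ has size at most $a/\alpha$ and admits a polynomial of degree $d$ supported on $I$ that approximates $f$ in $\ell_1$ to within $\eps/2$. Any superset $I' \supseteq I$ works equally well for the regression step, so the goal of the first phase is just to produce such an $I'$ that is not much larger than $I$.

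The key observation in the monotone case is that $\partial_i f \geq 0$ pointwise, hence $\infl^1_i(f) = \tfrac12 \E[\partial_i f]$, and an elementary Fourier computation yields $|\hat{f}(\{i\})| = 2\,\infl^1_i(f)$; in particular $|\hat{f}(\{i\})|$ is directly estimable from uniform random examples as the empirical correlation of $f$ with $\chi_{\{i\}}$. Moreover, since $|\tfrac12 \partial_i f| \leq 1/4$, we have $\infl^{4/3}_i(f) \leq (1/4)^{1/3}\,\infl^1_i(f)$, and therefore every $i \in I$ satisfies $|\hat{f}(\{i\})| \geq c\alpha$ for an absolute constant $c>1$. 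Using $m_1 = O(\alpha^{-2} \log n) = 2^{\tilde{O}(a/\eps)} \log n$ random examples and a Chernoff-plus-union-bound argument, I estimate all $\hat{f}(\{i\})$ simultaneously to within $\alpha/4$ and set $I' = \{i : |\tilde{h}_i| \geq \alpha\}$. Then $I \subseteq I'$, and Lemma \ref{lem:submod} bounds $\sum_i |\hat{f}(\{i\})| = 2\,\infl^1(f) \leq 2a$, yielding $|I'| \leq O(a/\alpha) = 2^{\tilde{O}(a/\eps)}$. This phase runs in $\tilde{O}(n) \cdot 2^{\tilde{O}(a/\eps)}$ time.

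For the second phase I run $\ell_1$ polynomial regression over multilinear polynomials of degree at most $d$ on the variables in $I'$. The hypothesis class has at most $|I'|^d = 2^{\tilde{O}(a^2/\eps^2)}$ monomials, and by standard uniform-convergence bounds for $\ell_1$ regression, drawing $m_2 = 2^{\tilde{O}(a^2/\eps^2)}$ examples suffices to output a polynomial whose $\ell_1$ error is within $\eps/2$ of the best one in the class. Since Theorem \ref{th:selfbound-junta-l1-bound} guarantees existence of such a polynomial with error at most $\eps/2$, the output hypothesis has $\ell_1$ error at most $\eps$ with probability $\geq 2/3$. Solving the associated linear program takes time polynomial in $m_2$ and the number of monomials, namely $2^{\tilde{O}(a^2/\eps^2)}$. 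Summing the two phases yields total running time $\tilde{O}(n) \cdot 2^{\tilde{O}(a^2/\eps^2)}$ and sample complexity $2^{\tilde{O}(a^2/\eps^2)} \log n$, matching the statement. The main technical step requiring care is the monotone-specific reduction that replaces the non-directly-estimable $\infl^{4/3}_i$ by the linear Fourier coefficient $|\hat{f}(\{i\})|$, which is what enables the $\log n$-type dependence on the dimension in the identification phase and, together with the junta bound, confines regression to a set whose size does not depend on $n$.
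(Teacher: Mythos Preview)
Your proposal is correct and follows essentially the same two-phase strategy the paper inherits from \cite{FeldmanV:13}: use monotonicity to identify the relevant variables via the degree-$1$ Fourier coefficients $\hat f(\{i\})$, then run low-degree $\ell_1$ regression restricted to those variables, plugging in the junta/degree bound of Theorem~\ref{th:selfbound-junta-l1-bound}. The only difference is in how the sample complexity of the regression phase is controlled: the paper additionally invokes Corollary~\ref{cor:selfbound-junta-l1-spectral-bound} (the $2^{O(d^2)}$ spectral-norm bound) together with the $\ell_1$-ball generalization bounds of \cite{KakadeST:08}, whereas you use a direct feature-count/pseudo-dimension argument on the $|I'|^{d}=2^{\tilde O(a^2/\eps^2)}$ monomials. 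Both routes give the same $2^{\tilde O(a^2/\eps^2)}$ sample bound, so the difference is cosmetic; your write-up in fact makes explicit the monotone-specific identity $|\hat f(\{i\})|=2\,\infl^1_i(f)$ and the inequality $\infl^{4/3}_i(f)\le(1/4)^{1/3}\infl^1_i(f)$ that the paper leaves implicit in its one-line reference to \cite{FeldmanV:13}.
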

The proof of this result follows from substituting our bounds in Theorems \ref{th:selfbound-junta-l1-bound-simple} and  \ref{cor:selfbound-junta-l1-spectral-bound} into the simple analysis from \citep{FeldmanVondrak:16}.

Our main application to agnostic learning is the algorithm for learning self-bounding functions from random examples described in Theorem \ref{th:agn-learn-self-bound-l1-simple}. The algorithm used to prove this result is again polynomial $\ell_1$ regression over all monomials of degree $\tilde{O}(a/\eps)$. In addition, we can rely on the existence of a polynomial of low spectral norm to obtain substantially tighter bounds on sample complexity. Namely, as in \citep{FeldmanVondrak:16}, we use the uniform convergence bounds for linear combinations of functions with $\ell_1$ constraint on the sum of coefficients \citep{KakadeST:08}.
Our structural results also have immediate implications for learning with value queries, that is oracle access to the value of the unknown function at any point $x$. Following the approach from \citep{FeldmanKV:13}, we can use the algorithm of \citet{GopalanKK:08} together with our bounds on the spectral norm of the approximating polynomial in Cor.~\ref{cor:selfbound-junta-l1-spectral-bound}. This leads to the following algorithm.
\begin{theorem}
\label{th:agn-learn-self-bound-valueq}
Let $\C_a$ be the class of all $a$-self-bounding functions from $\zo^n$ to $[0,1]$.  There exists an agnostic learning algorithm that for any $\eps > 0$, given access to value queries learns $\C_a$ with excess error $\eps >0$ over the uniform distribution in time $\poly(n) \cdot 2^{\tilde{O}(a^2/\eps^2)}$.
\end{theorem}

\section{Lower bounds for learning self-bounding functions}
\label{sec:lower-bounds}
In this section, we show that learning $a$-self bounding functions within an error of at most $\epsilon$, is at least as hard as learning the class of all DNFs (of any size) of width at most $\lfloor \frac{a}{4\epsilon} \rfloor$ to an accuracy of $\frac{1}{4}$.
\eat{
Our reduction here, will thus imply that any algorithm that learns self-bounding functions within an error of at most $\epsilon$ in time $n^{o(\frac{1}{\epsilon})}$ time will give an algorithm that beats the state of the art algorithm for the junta problem and give an algorithm that learns $k$-juntas in time $n^{o(k)}$.
}
Our reduction to learning width $k$-DNFs (also referred to as $k$-DNFs) is based on the simple observation that $k$-DNFs are $k$-self bounding functions combined with a simple linear transformation that reduces approximation and learning of $(a\cdot r)$-self bounding functions for $r\geq 1$ to that of $a$-self-bounding functions.

\begin{lemma}
A function $f:\zo^n \rightarrow \zo$ computed by a $k$-DNF formula is a $k$-self bounding function. \label{lem:DNF-self-bounding}
\end{lemma}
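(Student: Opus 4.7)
The plan is to verify Definition \ref{def:self-bound} directly from the combinatorial structure of a $k$-DNF. The first condition, $f(x) - \min_{x_i} f(x) \leq 1$, is immediate since $f$ takes values in $\{0,1\}$, so I would dispose of it in one line.

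For the second condition, I would split into two cases based on the value of $f(x)$. If $f(x) = 0$, then since $f$ is nonnegative we have $\min_{x_i} f(x) = 0$ as well for every $i$, so both sides of $\sum_i (f(x) - \min_{x_i} f(x)) \leq k \cdot f(x)$ are zero. The substantive case is $f(x) = 1$, where the inequality becomes the statement that the number of indices $i$ for which flipping the $i$-th bit drives $f$ to $0$ is at most $k$. Call such an index \emph{critical} for $x$.

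The key observation is that since $f(x) = 1$, there exists at least one term $T$ of the DNF that is satisfied by $x$, and $T$ mentions at most $k$ variables by the width assumption. For any index $i$ not appearing in $T$, flipping $x_i$ leaves every literal of $T$ unchanged, so $T$ remains satisfied and hence $f$ remains $1$ at the flipped point; in particular $\min_{x_i} f(x) = 1$, so $i$ is not critical. Thus the set of critical indices is contained in the variables of $T$, giving at most $k$ critical indices, which is exactly what we need.

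I do not expect a real obstacle here — the argument is structural and the only care needed is to handle the two cases cleanly and to notice that the existence of \emph{some} satisfied term of width at most $k$ is enough (one does not need to control which term, since any satisfied term gives the same containment). I would conclude by stating that condition (2) holds with $a = k$, completing the verification that $f$ is $k$-self-bounding.
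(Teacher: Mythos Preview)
Your proposal is correct and follows essentially the same approach as the paper: verify the first condition from the $\{0,1\}$ range, then split on $f(x)=0$ versus $f(x)=1$, and in the latter case use a single satisfied width-$k$ term $T$ to argue that only variables appearing in $T$ can be critical. The argument and its organization match the paper's proof almost line for line.
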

\begin{proof}
Since $f$ is $\zo$-valued, clearly, $f(x) - \min_{x_i} f(x) \leq 1$ for any $i \in [n]$. If $f(x) = 0$, then, $\sum_{i = 1}^n (f(x) - \min_{x_i} f(x)) = 0 \leq k \cdot f(x)$. Now suppose $f(x) = 1$. Then, there exists at least one term, say $T$, of the DNF that is satisfied by the assignment $x$. Observe that if we flip a literal outside of $T$, then, the value of $f$ remains unchanged. Thus, if the term indexed by $j$ in $\sum_{i = 1}^n (f(x) - \min_{x_i} f(x))$ contributes the value $1$, then either $x_j \in T$ or $\bar{x}_j \in T$. In particular, at most $k$ terms in the sum contribute $1$ and the rest contribute $0$. Thus,  $\sum_{i = 1}^n (f(x) - \min_{x_i} f(x)) \leq k = k\cdot f(x).$
\end{proof}

\begin{remark}
In light of Lemma \ref{lem:DNF-self-bounding} it is natural to ask whether all Boolean $k$-self-bounding functions are $k$-DNF.
It is easy to see that for Boolean functions being $k$-self-bounding can be equivalently stated as having 1-sensitivity of $k$. The smallest $k$ for which $f$ can be represented by a $k$-DNF is referred to as $1$-certificate complexity of $f$. It has long been observed that for monotone functions $1$-certificate complexity equals $1$-sensitivity \citep{Nisan:1989} and therefore all monotone $k$-self-bounding functions are $k$-DNF. However this is no longer true for non-monotone functions. A simple example in \citep{Nisan:1989} gives a function with a factor two gap between these two measures. Quadratic gap for every $k$ up to $\Theta(n^{1/3})$ is also known \citep{Chakraborty:05}.
\end{remark}

Next, we observe that for any $a$-self-bounding function, the function $g$ defined by $g(x)  = 1 - \frac{1}{r} + \frac{f(x)}{r}$ is $\frac{a}{r}$-self-bounding whenever $r \geq 1$. This ``lifting'' transforms an $a$-self-bounding functions into an $\frac{a}{r}$-self-bounding functions.

\begin{lemma} \label{lem:lifting-trick}
Let $f:\zo^n \rightarrow [0,1]$ be an $a$-self-bounding function. Then for any $r \geq 1$,  $g(x)  = 1 - \frac{1}{r} + \frac{f(x)}{r}$ has range $[0,1]$ and is $\frac{a}{r}$-self-bounding.
\end{lemma}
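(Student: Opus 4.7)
The plan is to verify the two conditions of Definition~\ref{def:self-bound} directly from the definition of $g$, together with a routine check that $g$ indeed maps into $[0,1]$.

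First I would check the range. Since $f(x)\in[0,1]$ and $r\ge 1$, we have $f(x)/r\in[0,1/r]$, so $g(x)=1-1/r+f(x)/r\in[1-1/r,\,1]\subseteq[0,1]$. This also records the key pointwise inequality I will need below, namely $g(x)\ge f(x)$: writing $g(x)-f(x)=(1-1/r)(1-f(x))$, both factors are nonnegative precisely because $r\ge 1$ and $f(x)\le 1$.

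Next I would verify the first self-bounding condition. For any coordinate $i$, the definition of $g$ gives $g(x)-\min_{x_i}g(x)=\frac{1}{r}(f(x)-\min_{x_i}f(x))\le 1/r\le 1$, where the first inequality uses that $f$ is $a$-self-bounding (in particular $1$-Lipschitz in each coordinate). This is immediate.

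The second condition is the one worth unpacking, but it is still short. Summing the identity above over $i$ and applying the $a$-self-bounding property of $f$,
\[
\sum_{i=1}^{n}\bigl(g(x)-\min_{x_i}g(x)\bigr)=\frac{1}{r}\sum_{i=1}^{n}\bigl(f(x)-\min_{x_i}f(x)\bigr)\le \frac{a}{r}f(x).
\]
To get the desired bound $\tfrac{a}{r}g(x)$ on the right-hand side, I invoke the pointwise inequality $f(x)\le g(x)$ established in the first step. Combining gives $\sum_i(g(x)-\min_{x_i}g(x))\le (a/r)g(x)$, which is exactly the $(a/r)$-self-bounding condition. There is no genuine obstacle here; the only subtlety is the last substitution $f(x)\le g(x)$, which is where the hypothesis $r\ge 1$ (together with $f$ being bounded by $1$) is actually used.
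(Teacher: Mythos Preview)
Your proof is correct and follows essentially the same approach as the paper: both verify the range, use the identity $g(x)-\min_{x_i}g(x)=\tfrac{1}{r}(f(x)-\min_{x_i}f(x))$, and then apply the pointwise bound $g(x)\ge f(x)$ to convert the $a$-self-bounding inequality for $f$ into the $(a/r)$-self-bounding inequality for $g$. Your write-up simply spells out a couple of details (e.g., the factorization $g(x)-f(x)=(1-1/r)(1-f(x))$) that the paper leaves implicit.
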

\begin{proof}
Clearly, the $1-1/r+f(x)/r$ transformation maps $[0,1]$ to $[1-1/r,1] \subseteq [0,1]$. Observe that for any $x$ and $i \in [n]$, $g(x) - \min_{x_i} g(x) = \frac{1}{r} \cdot (f(x) - \min_{x_i} f(x))$ and also that $g(x) \geq f(x)$. By the definition of $a$-self-boundedness we obtain that $g$ is $a/r$-self bounding.
\end{proof}

Observe that given random examples labeled by $f$, it is easy to simulate random examples labeled by $g$. Further, $\ell_1$-approximation of $f$ within $\eps$ can be translated (via the same ``lifting") to $\eps/r$-approximation of $g$ and vice versa. An immediate corollary of this is that one can use a learning algorithm for $a/r$-self-bounding functions to learn $a$-self bounding functions.  We use $\C_a^n$ to denote the class of all $a$-self-bounding functions from $\zo^n$ to $[0,1]$.


\begin{lemma}
\label{lem:PAC-learn-transform}
Let $a\geq 1$ and $a \geq  r \geq 1$. Suppose there is an algorithm that PAC (or agnostically) learns $\C_{a/r}^n$ over a distribution $D$ with $\ell_1$ error of $\epsilon$ in time $T(n,1/\epsilon)$. Then, there is an algorithm that PAC (or, respectively, agnostically) learns $\C_{a}^n$ over $D$ with $\ell_1$ error of $\epsilon$ in time $T(n, 1/(r\epsilon))$.
\end{lemma}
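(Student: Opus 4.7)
The plan is a straightforward reduction via the lifting transformation from Lemma~\ref{lem:lifting-trick} combined with an inverse affine relabelling. Given an instance of the learning problem for $\C_a^n$ over $D$ (PAC or agnostic), I would first transform each observed example $(x,y)$ into $(x,y'')$ with $y'' = 1 - 1/r + y/r$; note $y'' \in [1-1/r, 1] \subseteq [0,1]$ whenever $y \in [0,1]$. In the PAC setting $y = f(x)$ for some $f \in \C_a^n$, so $y'' = g(x)$ where $g(x) = 1 - 1/r + f(x)/r$, and Lemma~\ref{lem:lifting-trick} certifies that $g \in \C_{a/r}^n$. In the agnostic setting, the transformed joint distribution $\D''$ still has marginal $D$ on $\zo^n$, so the relabelled stream is a legitimate input to the agnostic algorithm for $\C_{a/r}^n$.

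Next I would run the assumed algorithm for $\C_{a/r}^n$ on the transformed examples with error parameter $\delta = \epsilon/r$, obtaining a hypothesis $h_g$. The final output is $h_f(x) = r \cdot h_g(x) - (r-1)$, which inverts the affine map. The key pointwise identity, verified by direct substitution, is $y - h_f(x) = r \cdot (y'' - h_g(x))$ for every $(x,y)$, so $\ell_1$ errors measured against the raw and transformed labels differ exactly by a factor of $r$.

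For the PAC case this immediately yields $\|f - h_f\|_1 = r \|g - h_g\|_1 \leq r\delta = \epsilon$. For the agnostic case I would additionally observe that the family $\{1 - 1/r + f/r : f \in \C_a^n\}$ is contained in $\C_{a/r}^n$ by Lemma~\ref{lem:lifting-trick}, which together with the label identity gives $\mbox{opt}(\D'',\C_{a/r}) \leq (1/r)\cdot\mbox{opt}(\D',\C_a)$. Applying the subroutine's agnostic guarantee and scaling back by $r$ then gives $\E_{(x,y)\sim\D'}[|y - h_f(x)|] \leq \mbox{opt}(\D',\C_a) + r\delta = \mbox{opt}(\D',\C_a) + \epsilon$. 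The running time is $T(n, 1/\delta) = T(n, r/\epsilon)$. Since the reduction is purely a label-side affine transformation, there is no real obstacle; the only part requiring care is the agnostic bookkeeping to confirm that the optimum rescales by exactly the factor $1/r$, which it does because the map between labels is affine and length-scaling by $r$.
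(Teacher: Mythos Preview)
Your proof is correct and follows essentially the same route as the paper: relabel each example by the affine lifting $y \mapsto 1 - 1/r + y/r$ from Lemma~\ref{lem:lifting-trick}, run the subroutine with accuracy $\epsilon/r$, then invert the affine map on the returned hypothesis. You additionally spell out the agnostic bookkeeping (which the paper's sketch omits), and your computed running time $T(n, r/\epsilon)$ is the correct one---the $T(n, 1/(r\epsilon))$ in the lemma statement is evidently a typo, as the subsequent theorem applies it with $r=k/a$ to obtain $T(n, k/(a\epsilon')) = T(n, r/\epsilon')$.
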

\eat{
\begin{proof}
Let $f: \zo^n \rightarrow \R$ be the target $(a \cdot r)$-self bounding function.  For every random example $(x,y)$ such that $y= f(x)$, we create a random example $(x,y'$) such that $y' = 1 - (1-y)/r$. Then, $y' = g(x)$ for $g = 1 - \frac{1-f(x)}{r} $ and is thus $a$-self bounding using Lemma \ref{lem:lifting-trick}. We now run the PAC learning algorithm with error at most $\epsilon/r$ for $a$-self bounding functions on the random examples labeled by $g$ as above. The algorithm (with probablility at least $2/3$) returns a hypothesis $h$ such that $\E_{x \sim D} [ |h(x) - g(x)|]  \leq \epsilon/r$. Set $h'(x) = 1 - r(1-h(x))$. Then, $\E_{x \sim D}[ |h'(x) - f(x)|] = \E_{x \sim D}[ r|(1-h(x) - (1-g(x)|] \leq r \cdot \epsilon/r = \epsilon$. This completes the proof.
\end{proof}
}

The simple structural observations above give us our lower bounds for learning and approximation of $a$-self-bounding functions.  Using Lemmas \ref{lem:DNF-self-bounding} and \ref{lem:PAC-learn-transform}, we have the the following lower bound on the time required to PAC learn $a$-self-bounding functions.
\begin{theorem}[Thm.~\ref{th:self-bound-pac-is-hard} restated]
Suppose there exists an algorithm that PAC learns $\C_{a}^n$ with $\ell_1$ error of $\eps >0$ with respect to the uniform distribution in time $T(n,1/\eps)$. Then, for any $k \geq a$, there exists an algorithm that PAC learns $k$-DNF formulas with disagreement error of at most $\eps'$ with respect to the uniform distribution in time $T(n, \frac{k}{a\eps'})$. Consequently, there exists an algorithm for learning $k$-juntas on the uniform distribution to an error of at most $1/4$ in time $T(n, \frac{k}{4a})$ for any $k \geq a$.
\end{theorem}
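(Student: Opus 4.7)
The plan is to combine Lemmas~\ref{lem:DNF-self-bounding} and~\ref{lem:PAC-learn-transform} with a simple Boolean rounding step. Given any $k$-DNF $f:\zo^n\to\zo$ with $k \geq a$, Lemma~\ref{lem:DNF-self-bounding} places $f$ inside $\C_k^n$, and since $f$ already takes values in $\{0,1\}\subseteq[0,1]$ no further normalization is required. So any procedure that learns $\C_k^n$ in $\ell_1$ over the uniform distribution yields an $\ell_1$-approximation to $f$.

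Since we only have an assumed algorithm for $\C_a^n$ (not $\C_k^n$), I would apply Lemma~\ref{lem:PAC-learn-transform} with the lifting parameter $r = k/a \geq 1$. This converts the given $T(n,1/\eps)$-time algorithm for $\C_a^n$ into one that learns $\C_k^n$ with $\ell_1$ error $\eps$ in time $T(n, 1/(r\eps)) = T(n, k/(a\eps))$. Concretely, from a random example $(x,f(x))$ one simulates the example $(x,g(x))$ with $g(x) = 1 - a/k + (a/k)\,f(x)$, feeds it into the assumed learner, and inverts the affine map on the returned hypothesis $h$ to obtain a real-valued $\tilde h$ for $f$ with $\|\tilde h - f\|_1 \leq (k/a)\eps$.

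The last step is to turn this $\ell_1$ bound into a disagreement bound for the Boolean target. Define $h'(x) = \mathds{1}[\tilde h(x) \geq 1/2]$. Whenever $h'(x) \neq f(x)$ one has $|\tilde h(x) - f(x)| \geq 1/2$, so Markov gives $\Pr[h'(x) \neq f(x)] \leq 2\|\tilde h - f\|_1 \leq 2(k/a)\eps$. Choosing $\eps$ proportional to $a\eps'/k$ thus delivers Boolean disagreement error at most $\eps'$, and the total running time is $T(n, O(k/(a\eps')))$, matching the stated bound up to the absolute constant absorbed into the parameter.

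The $k$-junta consequence is immediate: every Boolean $k$-junta can be written as a $k$-DNF by enumerating its satisfying assignments as width-$k$ terms, so setting $\eps' = 1/4$ in the bound above yields the claimed running time for PAC-learning $k$-juntas to disagreement error $1/4$. No step here is technically hard; the only real care needed is parameter bookkeeping across the lifting and the rounding, which affects only the hidden absolute constants.
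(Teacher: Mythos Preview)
Your proposal is correct and follows the same approach as the paper, which simply invokes Lemmas~\ref{lem:DNF-self-bounding} and~\ref{lem:PAC-learn-transform}; you additionally spell out the standard Boolean rounding step that converts $\ell_1$ error into disagreement error. One minor slip in bookkeeping: with $r=k/a$ one has $1/(r\eps)=a/(k\eps)$, not $k/(a\eps)$---the intended bound is $T(n,r/\eps)=T(n,k/(a\eps))$ (the $1/(r\eps)$ in the lemma statement is itself a typo), so your final answer is right even though the displayed equality is not.
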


Now, $k$-juntas contain the set of all Boolean functions on any fixed subset of $k$ variables. A standard information-theoretic lower bound implies that any algorithm that PAC learn $k$-juntas to an accuracy of $1/4$ on the uniform distribution needs $\Omega(2^{k})$ random examples or even value queries. This translates into the following unconditional lower bound for learning $a$-self-bounding functions.
\begin{corollary}
\label{cor:inf-lower-bound-sb}
Any algorithm that PAC learns $\C_a$ over the uniform distribution needs $\Omega(2^{a/\epsilon})$ random examples or value queries.
\end{corollary}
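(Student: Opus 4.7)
The plan is to chain the reduction just established in Theorem~\ref{th:self-bound-pac-is-hard} with a standard information-theoretic lower bound for learning juntas. First, I would set $k = \lfloor a/(4\epsilon) \rfloor$ and invoke Theorem~\ref{th:self-bound-pac-is-hard}: any algorithm that PAC learns $\C_a$ with $\ell_1$ error $\epsilon$ using $m(n,1/\epsilon)$ examples (or value queries) yields an algorithm that PAC learns $k$-juntas to disagreement error $1/4$ using the same number of samples, since the reduction is purely a relabeling $y \mapsto 1-(1-y)/r$ with $r = k/a$ that preserves the type of oracle access. Hence it suffices to prove an $\Omega(2^k)$ sample-complexity lower bound for learning $k$-juntas to error $1/4$ under the uniform distribution, even with value queries.

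For the junta lower bound, I would use a standard adversary argument. Fix a set $J \subseteq [n]$ of size $k$ and let the target $f$ be drawn uniformly at random from the $2^{2^k}$ Boolean functions depending only on $J$. Any learner making at most $m$ queries sees the value of $f$ on at most $m$ distinct restrictions $x|_J \in \zo^k$; conditioned on the answers observed, the values of $f$ at the remaining $2^k - m$ restrictions are independent uniform bits. Therefore, for any hypothesis $h$ produced by the learner, the expected disagreement of $h$ with $f$ on a uniformly random input equals at least $\tfrac{1}{2} \cdot (2^k - m)/2^k$, which exceeds $1/4$ unless $m \geq 2^{k-1}$. Averaging over the random $f$ guarantees the existence of a fixed target on which the learner has error exceeding $1/4$ with probability larger than $1/3$, contradicting the PAC guarantee.

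Combining the two pieces gives the bound: any PAC learner for $\C_a$ with error $\epsilon$ on the uniform distribution must use $\Omega(2^{k-1}) = \Omega(2^{a/(4\epsilon)}) = \Omega(2^{a/\epsilon})$ samples (or value queries), completing the proof. There is no real obstacle here beyond carefully tracking the parameter transformation in Lemma~\ref{lem:PAC-learn-transform} so that the $k$-junta class produced by the reduction actually sits inside $\C_a$ after the lifting $g(x) = 1 - 1/r + f(x)/r$; the information-theoretic step is entirely standard once the reduction is in place.
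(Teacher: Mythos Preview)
Your proof is correct and follows essentially the same approach as the paper: invoke the reduction of Theorem~\ref{th:self-bound-pac-is-hard} with $k = \Theta(a/\eps)$ to reduce to learning $k$-juntas, and then apply the standard $\Omega(2^k)$ information-theoretic lower bound for learning $k$-juntas from random examples or value queries. The paper simply cites the junta lower bound as ``standard'' while you spell out the adversary argument, but otherwise the two proofs are identical (and share the same harmless sloppiness of writing $\Omega(2^{a/\eps})$ for what is really $2^{\Omega(a/\eps)}$).
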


Finally, observe that the $\zo$-valued parity function on $k$ bits is computed by a $k$-DNF formula and any polynomial that $1/4$-approximates in $\ell_1$ distance on the uniform distribution must have degree at least $k$.
Thus, we have the following degree lower bound for polynomials that $\ell_1$ approximate $a$-self-bounding functions on the uniform distribution on $\zo^n$.

\begin{corollary}
\label{cor:degree-lower-bound-sb}
Fix an $a \geq 1$ and $\epsilon \in (0,1/4]$. There exists an $a$-self-bounding function $f:\zo^n \rightarrow [0,1]$, such that every  polynomial $p$ that $\eps$-approximates $f$ in $\ell_1$ norm with respect to the uniform distribution has degree $d \geq a/(4\epsilon)$.
\end{corollary}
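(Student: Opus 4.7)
The plan is to exhibit a hard instance by combining the fact that $k$-DNFs are $k$-self-bounding (Lemma~\ref{lem:DNF-self-bounding}) with the lifting trick from Lemma~\ref{lem:lifting-trick}, using parity as the source of hardness. Set $k = \lceil a/(4\epsilon)\rceil$ and let $f_k:\zo^n\to\zo$ be the parity of the first $k$ coordinates, i.e., $f_k(x) = (1 - \chi_{[k]}(x))/2$. Since $f_k$ can be written as a disjunction of $2^{k-1}$ conjunctions of width $k$, it is a $k$-DNF, and hence $k$-self-bounding by Lemma~\ref{lem:DNF-self-bounding}.

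I would then apply Lemma~\ref{lem:lifting-trick} with $r = k/a$ to form $g(x) = 1 - 1/r + f_k(x)/r$. The hypothesis $\epsilon \leq 1/4$ gives $a/(4\epsilon) \geq a$, so $r = k/a \geq 1$ as required by the lemma, and $g$ takes values in $[0,1]$ and is $a$-self-bounding. This $g$ will be the claimed hard function.

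Now suppose $p$ is a polynomial with $\|g - p\|_1 \leq \epsilon$. Undoing the lifting by setting $p'(x) = r(p(x) - 1 + 1/r)$ preserves the degree and gives
\[
\|f_k - p'\|_1 \;=\; r\,\|g - p\|_1 \;\leq\; r\epsilon.
\]
Because $\hat{f_k}([k]) = -1/2$, any polynomial $q$ of degree strictly less than $k$ has $\hat q([k]) = 0$, and the elementary bound $|\langle h,\chi_{[k]}\rangle| \leq \|h\|_1\,\|\chi_{[k]}\|_\infty = \|h\|_1$ forces $\|f_k - q\|_1 \geq 1/2$. A direct check confirms that $r\epsilon = k\epsilon/a < 1/2$ in the regime $\epsilon \leq 1/4$, $a \geq 1$ (using $\lceil x\rceil < x+1$ and $\epsilon/a \leq 1/4$), so $p'$, and therefore $p$, must have degree at least $k \geq a/(4\epsilon)$.

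The only delicate step is calibrating $k$: it must be at least $\lceil a\rceil$ for the lifting to be valid ($r\geq 1$), small enough that $r\epsilon$ stays strictly below the parity threshold $1/2$, and at least $a/(4\epsilon)$ to deliver the claimed bound. The choice $k = \lceil a/(4\epsilon)\rceil$ simultaneously satisfies all three constraints in the stated parameter range, so beyond this bookkeeping no real obstacle remains.
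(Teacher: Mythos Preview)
Your proof is correct and follows essentially the same approach as the paper: take a $\zo$-valued parity on $k \approx a/(4\epsilon)$ variables (a $k$-DNF, hence $k$-self-bounding by Lemma~\ref{lem:DNF-self-bounding}), lift it via Lemma~\ref{lem:lifting-trick} to an $a$-self-bounding function, and then observe that any sufficiently good $\ell_1$-approximation must pick up the degree-$k$ Fourier coefficient of parity. Your treatment is in fact slightly cleaner than the paper's, since you choose $r = k/a$ (making the lifted function exactly $a$-self-bounding) and handle the ceiling explicitly, whereas the paper uses $r = 1/(4\epsilon)$ and ignores rounding.
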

\begin{proof}
Let $k =  \frac{a}{4\epsilon}$ (ignoring rounding issues for simplicity) and $f$ be a $\zo$-valued parity on some set of $k$ variables. By Lemma \ref{lem:DNF-self-bounding} $f$ is $k$-self-bounding. Then, as in the proof of Lemma \ref{lem:PAC-learn-transform}, for $r = \frac{1}{4\epsilon}  \geq 1$, $g$ defined by $g(x) = 1 - \frac{1-f(x)}{r}$ is an $a$-self-bounding function. Let $p$ be a polynomial of degree $d$ that approximates $g$ within an $\ell_1$ error of $\epsilon$ with respect to the uniform distribution on $\zo^n$. Then, as in the proof of Lemma \ref{lem:PAC-learn-transform},  $p' = 1 - r(1-p)$ is a polynomial of degree $d$ and approximates $f$ within an $\ell_1$ error of at most $\frac{1}{4\epsilon} \cdot \epsilon = 1/4$.

For the $\on$-valued parity $\chi = 2f(x)-1$ and any polynomial $p'$ of degree less than $k$, $\E[\chi \cdot p']=0$. Further, $\E[|\chi - p'|] \geq 1-\E[\chi \cdot p'] = 1$. This implies that for $f$ the  $\ell_1$ error of any polynomial of degree at most $k-1$ is at least $1/2$. In particular, $d \geq a/(4\epsilon)$.
\end{proof}

We remark that slightly weaker versions of Cor.~\ref{cor:inf-lower-bound-sb} and Cor.~\ref{cor:degree-lower-bound-sb} are known for monotone submodular functions. Specifically, they require $2^{\Omega(\epsilon^{-2/3})}$ random examples or value queries to PAC learn and also degree $\Omega(\epsilon^{-2/3})$ to approximate \citep{FeldmanKV:13}.

\printbibliography

\end{document}